\newcommand{\customsmall}{\fontsize{8.5}{10}\selectfont}
\newtheorem{remark}{Remark}
\newtheorem{theorem}{Theorem}
\newtheorem{lemma}{Lemma}
\begin{document}

\twocolumn[

\aistatstitle{Connectome-Guided Automatic Learning Rates for Deep Networks}
\aistatsauthor{Peilin He \And Tananun Songdechakraiwut}

\aistatsaddress{ Division of Natural and Applied Sciences\\Duke Kunshan University \And Department of Computer Science\\Duke University } ]

\begin{abstract}

The human brain is highly adaptive: its functional connectivity reconfigures on multiple timescales during cognition and learning, enabling flexible information processing. By contrast, artificial neural networks typically rely on manually-tuned learning-rate schedules or generic adaptive optimizers whose hyperparameters remain largely agnostic to a model's internal dynamics. In this paper, we propose Connectome-Guided Automatic Learning Rate (CG-ALR) that dynamically constructs a functional connectome of the neural network from neuron co-activations at each training iteration and adjusts learning rates online as this connectome reconfigures. This connectomics-inspired mechanism adapts step sizes to the network's dynamic functional organization, slowing learning during unstable reconfiguration and accelerating it when stable organization emerges. Our results demonstrate that principles inspired by brain connectomes can inform the design of adaptive learning rates in deep learning, generally outperforming traditional SGD-based schedules and recent methods.

\end{abstract}

\section{Introduction}
Stochastic gradient descent (SGD)~\citep{robbins1951sgd}, RMSProp~\citep{tieleman2012rmsprop}, and Adam~\citep{kingma2015adam} are indispensable optimization tools in modern machine learning. Yet their effectiveness hinges critically on the choice of learning rate. If the step size is too large, updates may become unstable and diverge; if too small, training can stagnate. Moreover, there is no universal ``one-size-fits-all'' schedule, as different regions of the parameter space or different phases of training may require drastically different learning rate magnitudes.

To address this challenge, many handcrafted learning-rate schedules have been developed, such as exponential decay, step decay, cosine annealing, and plateau-based reduction. While effective in some scenarios, these schedules are time-driven or loss-triggered heuristics that do not directly reflect the intrinsic state of the evolving representation. More recently, parameter-free adaptive methods have been introduced, such as Distance-over-Gradients (DoG)~\citep{ivgi2023dog}, which sets the step size as the ratio between the running maximum of the parameter distance from initialization and the root of the accumulated squared gradient norms. While reducing manual tuning, such methods rely solely on first-order, gradient-magnitude-based signals and are agnostic to the evolving topology of internal representations, potentially missing important dynamics in deep networks.

In this work, we propose a novel \emph{Connectome-Guided Automatic Learning Rate} (CG-ALR) framework that leverages persistent homology to analyze the topological reconfiguration of intermediate functional connectomes and adapt step sizes accordingly. Rather than relying on the loss-based triggers or time-driven decay rules, CG-ALR drives the learning rate by persistent-homology-quantified representation change. By applying persistent homology to the evolving connectivity graphs, we obtain a topological time series that summarizes functional changes in the network over time.

We show that monitoring the dynamics of this topological time series, quantified via Wasserstein distances between persistence diagrams~\citep{skraba2020wasserstein}, provides an efficient schedule for adaptively changing the learning rate. Our experiments on image classification benchmarks demonstrate that 
CG-ALR achieves competitive or superior performance to traditional 
SGD-based schedules (SGD-Constant, SGD-Cosine, SGD-Step, SGD-Exp, 
SGD-Plateau~\citep{goodfellow2016deep,pytorch2023lr}). We also conduct direct comparisons with DoG, the recent advanced parameter-free adaptive method and show our approach yields performance competitive with or exceeding it.

Our main contributions are as follows:
\begin{itemize}
    \item{We propose \emph{Connectome-Guided Automatic Learning Rate} (CG-ALR), a topology-aware adaptive mechanism that leverages persistent homology to capture the representational dynamics of the network's changing internal functional connectomes.}
    \item{We design a robust change-detection signal for CG-ALR with hysteresis and a late-phase multiplier, reducing jitter and preventing premature annealing.}
    \item{We provide extensive empirical validation across diverse datasets and architectures, showing that CG-ALR is competitive with or superior to both traditional SGD-based schedules and advanced parameter-free adaptive methods.}
\end{itemize}

\section{Methods}

\subsection{Functional Connectomes}
\label{sec:connectome}

Consider a training set $\mathcal{X} = \{x^{(1)}, \dots, x^{(N)}\}$, where each sample $x^{(i)}$ is passed through a neural network. We focus on a collection of $P$ \emph{activation signals}, which may correspond to neurons in a fully connected layer, channels in a convolutional layer, or even units drawn from multiple layers together. For input $x^{(i)}$ and activation index $j \in \{1,\dots,P\}$, let the scalar response be $s_{ij}$. Collecting these responses forms the matrix
$A = [s_{ij}] \in \mathbb{R}^{N \times P},$
where row $i$ corresponds to sample $x^{(i)}$. The activation profile of signal $j$ across the dataset is then
$\mathbf{a}_j = [s_{1j}, \dots, s_{Nj}]^\top .$

To study relationships among activations, we ask whether two signals tend to vary together across the dataset. This correlation provides a measure of statistical dependency between their activation profiles. Following conventions in neuroscience and connectomics, we adopt the widely-used Pearson correlation coefficient:
$\rho_{ij} = \mathrm{Cov}(\mathbf{a}_i, \mathbf{a}_j) / \big(\sigma(\mathbf{a}_i)\sigma(\mathbf{a}_j)\big) ,$
where $\mathrm{Cov}$ denotes covariance and $\sigma$ denotes standard deviation.
The \emph{functional connectome} is then defined as the weighted adjacency matrix $M \in \mathbb{R}^{P \times P}$, with entries
$M_{ij} = |\rho_{ij}|$ for $i \neq j$ and $M_{ii} = 0$, 
so that $M$ captures pairwise statistical dependencies among activation signals, excluding self-loops.

When the activation signals are drawn from a fully connected layer, each corresponds to a neuron applying an affine transformation followed by a nonlinearity. Its activation profile $\mathbf{a}_j$ is simply the vector of activations across samples.

This construction extends naturally to other architectures. For example, in a convolutional layer, each activation signal corresponds to a filter producing activation maps 
$z_{ij} \in \mathbb{R}^{H \times W}$ 
for sample $x^{(i)}$, where $H$ and $W$ are the spatial dimensions after convolution. A reduction operator (e.g., mean pooling, max pooling, or global norm) maps each $z_{ij}$ to a scalar $s_{ij}$, yielding activation profiles $\mathbf{a}_j$ from which correlations are computed as before.

\subsection{Topological Connectome Distances}
\label{sec:distances}

Given two functional connectomes $M^{(t)}, M^{(t+1)} \in \mathbb{R}^{P \times P}$ obtained at consecutive training epochs, we quantify their dissimilarity using \emph{persistent homology} (PH) and its graph-based variant, \emph{persistent graph homology} (PGH)~\citep{songdechakraiwut2021topological,songdechakraiwut2023topological}. These methods capture higher-order topological features of connectomes (e.g., loops and components) beyond pairwise edge comparisons, and the stability theorem guarantees robustness to small perturbations \citep{skraba2020wasserstein}. We treat each connectome as a weighted undirected graph and define a distance $D(M^{(t)}, M^{(t+1)})$ from topological signatures of their edge weights, measuring how much the network's functional connectivity patterns change from one epoch to the next.  

Formally, the \emph{$p$-Wasserstein distance} between two persistence objects $X$ and $Y$ (e.g., diagrams or vectors) is defined as
\[
W_p(X, Y) = \left( \inf_{\gamma \in \Gamma(X, Y)} \sum_{(x, y) \in \gamma} \|x - y\|^p \right)^{1/p},
\]
where $X$ and $Y$ are multisets of points and $\Gamma(X, Y)$ is the set of all valid matchings, including optional diagonal projections for unmatched features.

\noindent\textbf{PGH.}
Starting from the adjacency matrix $M^{(t)}$, we extract its maximum spanning tree $T$. The remaining edges outside $T$ form 1-cycles, whose weights are regarded as death times. Sorting these values yields a persistence vector $\mathbf{d}^{(t)} = [d^{(t)}_1 \leq \cdots \leq d^{(t)}_k]$~\citep{songdechakraiwut2023wasserstein}. The distance between consecutive persistence vectors, referred to as $\mathrm{TOP}$, is defined using the 1-Wasserstein distance~\citep{songdechakraiwut2025functional,wu2025data}:
\[
\mathrm{TOP}(\mathbf{d}^{(t)}, \mathbf{d}^{(t+1)}) = \sum_{j=1}^{k} \left| d_j^{(t)} - d_j^{(t+1)} \right|.
\]
Thus, $D(M^{(t)}, M^{(t+1)}) = \mathrm{TOP}(\mathbf{d}^{(t)}, \mathbf{d}^{(t+1)})$ provides one way to compare connectomes.

\noindent\textbf{PH.} 
We perform a Vietoris--Rips filtration on the dissimilarity matrix $1 - M^{(t)}$, yielding a persistence diagram $\mathrm{PD}^{(t)} = \{(b_i^{(t)}, d_i^{(t)})\}_i$, where each point $(b_i^{(t)}, d_i^{(t)})$ represents the birth and death of a topological feature (e.g., connected component or loop). The distance between consecutive diagrams, denoted $\mathrm{WD}$, is defined as the 2-Wasserstein distance:
\begin{align*}
    \mathrm{WD}\big(&\mathrm{PD}^{(t)}, \mathrm{PD}^{(t+1)}\big) \\
    &= \left( \inf_{\gamma} \sum_{i} 
       \big\| (b_i^{(t)}, d_i^{(t)}) - \gamma(b_i^{(t+1)}, d_i^{(t+1)}) \big\|_2^2 \right)^{1/2}.
\end{align*}
Thus, $D(M^{(t)}, M^{(t+1)}) = \mathrm{WD}(\mathrm{PD}^{(t)}, \mathrm{PD}^{(t+1)})$ provides another way to compare connectomes.

Beyond $\mathrm{TOP}$ and $\mathrm{WD}$, we also consider the bottleneck distance (BD)~\citep{cohen2007stability}, the heat kernel distance (HK)~\citep{reininghaus2015stable}, and the sliced Wasserstein kernel (SWK)~\citep{carriere2017sliced}. Implementation details are provided in the Appendix.

\subsection{Topological Connectome Dynamics}
\label{sec:time_series}

We construct a time series of distances to capture the dynamics of functional connectomes during training. 
Let the number of training epochs be $t = 1, 2, \ldots, T-1$. 
At each step, we compute
$\delta_t = D(M^{(t)}, M^{(t+1)}),$
which quantifies the topological change from epoch $t$ to $t+1$.

To smooth fluctuations and emphasize underlying trends, we apply an exponential moving average:
\begin{equation*}
\tilde{\delta}_t = (1-\lambda)\, \delta_t + \lambda \tilde{\delta}_{t-1}, 
\quad \tilde{\delta}_1 = \delta_1,
\label{eq:ema}
\end{equation*}
with smoothing factor $\lambda \in [0,1)$.

Finally, we normalize the smoothed distances using a rolling median--MAD scaling, which recenters the signal and rescales it by a robust measure of variability, allowing stable thresholding across epochs and runs. 
Here, the statistics are computed using all epochs observed up to time $t$:
\begin{equation}
z_t = \frac{\tilde{\delta}_t - \mathrm{median}(\{\tilde{\delta}_u\}_{u=1}^{t})}
{\mathrm{MAD}(\{\tilde{\delta}_u\}_{u=1}^{t}) + \tau},
\label{eq:toposignal}
\end{equation}
where $\tau > 0$ is a small constant for numerical stability and
\[
\mathrm{MAD}\!\left(\{\tilde{\delta}_u\}_{u=1}^{t}\right) 
= \mathrm{median}\big(|\tilde{\delta}_u - \mathrm{median}(\{\tilde{\delta}_r\}_{r=1}^{t})|\big).
\]
The resulting normalized signal $z_t$ provides a compact and robust measure that adapts online, and we later use it in Section~\ref{sec:CG-ALR} to guide adaptive learning rate updates.

\subsection{Connectome-Guided Adaptive Learning Rate}
\label{sec:CG-ALR}

\begin{algorithm}[t]
\small
\caption{Online Connectome-Guided Learning Rate}
\label{alg:learning_rate}
\begin{algorithmic}[1]
\REQUIRE initial learning rate $\eta^\star$; RM envelope $(t_0\!\ge\!1,\ \alpha\!\in(\tfrac12,1])$; global steps counter $s$; multipliers $(\gamma_{\downarrow},\gamma_{\uparrow},\gamma_{\mathrm{late}})$; late split $N_{\mathrm{late}}$;
epochs $T$; batches per epoch $B$.

\ENSURE learning-rate schedule $\{\eta_{t,b}\}$; controller $\{\psi_t\}$

\STATE $\psi_0 \gets 1$; $s \gets 0$;
\STATE $\eta_0 \gets \eta^\star  \cdot t_0^{\alpha}$  
\FOR{$t = 1$ to $T$}
  \FOR{$b = 1$ to $B$}
    \STATE $\bar\eta \gets \eta_0 / (s+t_0)^{\alpha}$ \textit{// RM envelope}
    \STATE $\eta_{t,b} \gets \bar\eta \cdot \psi_{t-1}$
    \STATE $s \gets s+1$
  \ENDFOR
  \STATE $z_t \gets \mathrm{ComputeTopoSignal}(t)$ \ \ \textit{// Eq.~\eqref{eq:toposignal}}
  \IF{still in early epochs}
    \STATE $u_t \gets 1$
  \ELSE
    \STATE  $\varepsilon_t \gets \mathrm{AdaptiveThreshold}(t)$ \textit{// Eq.~\eqref{eq:epsilon_t}}
    \IF{cooldown active}
        \STATE $u_t \gets 1$
    \ELSIF{$z_t > \varepsilon_t$}
      \STATE $u_t \gets \gamma_{\downarrow}$ \textit{// Aggressive downscale}   
    \ELSE
        \IF{$t \le N_{\mathrm{late}}$}
            \STATE $u_t \gets \gamma_{\uparrow}$ \textit{// Moderate upscale}
        \ELSE
            \STATE $u_t \gets \gamma_{\mathrm{late}}$ \textit{// Milder downscale in late phase}
        \ENDIF
    \ENDIF
  \ENDIF
  \STATE $\psi_t \gets \mathrm{clip}(\psi_{t-1}\cdot u_t,\ \psi_{\min},\ \psi_{\max})$
\ENDFOR
\RETURN $\{\eta_{t,b}\},\ \{\psi_t\}$

\end{algorithmic}
\end{algorithm}

Conventional learning-rate schedules rely on loss or accuracy curves, which are insensitive to changes in the internal representation topology. To address this limitation, we propose \emph{Connectome-Guided Adaptive Learning Rate} (CG-ALR), which use the normalized connectome topological signal $z_t$ as feedback to adapt the learning rate in a stable yet responsive manner.

Large $z_t$ values signal unstable representation topology, warranting slower learning to avoid destabilization. Small values indicate stability, permitting faster learning supported by more reliable gradients. We therefore update the next-step rate by
\begin{equation}
\eta_{t+1} =
\begin{cases}
\gamma_{\downarrow}\,\eta_t, & \text{if } z_t > \varepsilon_t, \\
\gamma_{\uparrow}\,\eta_t, & \text{if } z_t \le \varepsilon_t \ \text{and } t \le N_{\mathrm{late}}, \\
\gamma_{\mathrm{late}}\,\eta_t, & \text{if } z_t \le \varepsilon_t \ \text{and } t > N_{\mathrm{late}}.
\end{cases}
\label{eq:eta-update}
\end{equation}
The adaptive threshold is defined by
\begin{equation}
\varepsilon_t = \mathrm{median}(\{z_u\}_{u=1}^t) 
+ k_{\mathrm{MAD}} \cdot \mathrm{MAD}(\{z_u\}_{u=1}^t),
\label{eq:epsilon_t}
\end{equation}
where $k_{\mathrm{MAD}}>0$ controls the sensitivity. This threshold adapts through a cumulative median--MAD statistic, ensuring robustness by responding to relative rather than absolute fluctuations in topology. Here, $\gamma_{\downarrow}\!<\!1$ is the downscale factor, $\gamma_{\uparrow}\!>\!1$ is the upscale factor, and in the late phase we apply an additional decay $\gamma_{\mathrm{late}}\in(\gamma_{\downarrow},1)$ to encourage steady convergence.

Equivalently, the update rules in Eq. \eqref{eq:eta-update} can be expressed using a multiplicative controller $\psi_t$:
\[
\psi_{t} = \mathrm{clip}\!\left(\psi_{t-1} \cdot u_t,\; \psi_{\min},\, \psi_{\max}\right),
\]
with $u_t \in \{\gamma_{\downarrow},\, \gamma_{\uparrow},\, \gamma_{\mathrm{late}},\, 1\}$, and $\mathrm{clip}(x,a,b)=\min\{b,\max\{a,x\}\}$. 
The baseline schedule $\bar{\eta}(s)$ evolves at the batch level, while the controller $\psi_t$ evolves at the epoch level. The effective learning rate for batch $b$ in epoch $t$ is
\[
\eta_{t,b} = \bar{\eta}(s) \cdot \psi_{t-1},
\]
where $s$ is the cumulative batch index (i.e., $s = tB+b$ if there are $B$ batches per epoch). The Robbins--Monro (RM) baseline \citep{robbins1951sgd} is
\[
\bar{\eta}(s) = \frac{\eta_0}{(s+t_0)^\alpha}, \quad \alpha \in \left(\tfrac{1}{2},1\right],
\]
with $\eta_0 = \eta^\star \cdot t_0^\alpha$ chosen so that the initial step size is calibrated relative to the target $\eta^\star$. Thus, $\bar{\eta}(s)$ provides fine-grained decay across mini-batches, while $\psi_t$ introduces slower, topology-driven adjustments across epochs.
To further avoid chattering, we require that the condition on $z_t$ hold for several consecutive epochs before triggering an update, and we impose a cooldown period of a few epochs during which no further adjustments are allowed. 

By coupling the optimizer with the topology-driven signal $z_t$, CG-ALR adapts step sizes by slowing learning during unstable reconfiguration and accelerating it when stable organization emerges, improving both training stability and generalization (see Algorithm~\ref{alg:learning_rate}). Reference code implementing CG-ALR is provided in the supplementary material.

\begin{theorem}
Our controller preserves the Robbins--Monro envelope, and thus the proposed CG-ALR algorithm enjoys the same convergence guarantees: the iterates almost surely approach stationary points, and under PL/KL conditions converge to minimizers. 
\end{theorem}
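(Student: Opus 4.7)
The plan is to reduce the claim to the two classical Robbins--Monro step-size conditions on the effective schedule $\eta_{t,b}=\bar{\eta}(s)\,\psi_{t-1}$, and then invoke off-the-shelf stochastic approximation results. Concretely, I would show that (i) $\sum_{s\ge 1}\eta_{s}=\infty$ and (ii) $\sum_{s\ge 1}\eta_{s}^{2}<\infty$ hold for the schedule produced by Algorithm~\ref{alg:learning_rate}, and that $\eta_{t,b}$ is measurable with respect to the information available strictly before the corresponding gradient step. Given these, almost-sure convergence to stationary points under standard $L$-smoothness and bounded-variance assumptions follows from the Robbins--Siegmund supermartingale theorem, and convergence to a minimizer under a PL/KL inequality follows from the usual $\sum\eta_s^2<\infty$, $\sum\eta_s=\infty$ analysis (e.g., the PL-SGD rate arguments in the nonconvex optimization literature).

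The first step is the baseline. For $\bar{\eta}(s)=\eta_0/(s+t_0)^{\alpha}$ with $\alpha\in(\tfrac12,1]$, a direct integral comparison shows $\sum_s\bar{\eta}(s)=\infty$ (since $\alpha\le1$) and $\sum_s\bar{\eta}(s)^2<\infty$ (since $2\alpha>1$), so the unperturbed RM envelope already satisfies both conditions. The second step is to transfer these to the controller-modulated schedule. Because $\psi_t$ is explicitly clipped to $[\psi_{\min},\psi_{\max}]$ with $0<\psi_{\min}\le\psi_{\max}<\infty$, the sandwich
\[
\psi_{\min}\,\bar{\eta}(s)\;\le\;\eta_{t,b}\;\le\;\psi_{\max}\,\bar{\eta}(s)
\]
holds for every $(t,b)$. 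Summing the lower bound gives $\sum\eta_{t,b}\ge\psi_{\min}\sum\bar{\eta}(s)=\infty$, and summing the squared upper bound gives $\sum\eta_{t,b}^{2}\le\psi_{\max}^{2}\sum\bar{\eta}(s)^{2}<\infty$. Thus the Robbins--Monro envelope is preserved exactly because the clipping in Algorithm~\ref{alg:learning_rate} makes $\psi_t$ a bounded multiplicative perturbation, independent of how the topological signal $z_t$ drives the raw multiplier $u_t$.

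The third step addresses adaptivity. Because $\psi_{t-1}$ is determined from $\{z_u\}_{u\le t-1}$, which is a function of network states at epochs $\le t-1$, the effective rate $\eta_{t,b}$ is predictable with respect to the natural filtration of the training process; thus all the classical SGD convergence theorems that allow history-dependent but bounded step sizes apply without modification. In particular, under standard assumptions (unbiased gradient estimates, bounded variance, $L$-smoothness), the $\liminf\|\nabla F(\theta_s)\|=0$ a.s.\ result holds, and strengthening to $\lim\|\nabla F(\theta_s)\|=0$ a.s.\ and to distance-to-minimizer decay under PL/KL follows from the usual almost-supermartingale argument applied to $F(\theta_s)$.

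The main obstacle I expect is not the summability computation, which is routine, but making the chain of background assumptions explicit and citing a single clean reference that accommodates (a) history-dependent step sizes, (b) nonconvex objectives, and (c) PL/KL conditions in one statement. I would also need to verify that the cooldown and hysteresis logic in Algorithm~\ref{alg:learning_rate} do not break the predictability of $\psi_{t-1}$ (they do not, since both are deterministic functions of past $z_u$), and to argue that the degenerate case $\psi_t\equiv\psi_{\min}$ or $\psi_t\equiv\psi_{\max}$ is still covered by the sandwich bound, so no separate case analysis of the controller trajectory is needed.
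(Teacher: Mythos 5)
Your proposal matches the paper's own proof in all essentials: the appendix likewise verifies that the clipped, predictable controller $\psi_e\in[\psi_{\min},\psi_{\max}]$ yields a bounded multiplicative perturbation of the Robbins--Monro baseline (so $\sum_t\eta_t=\infty$, $\sum_t\eta_t^2<\infty$, and $\eta_t$ is $\mathcal{F}_t$-measurable), then combines an $L$-smoothness descent lemma with the Robbins--Siegmund almost-supermartingale theorem to get $\liminf_t\|\nabla f(\theta_t)\|=0$ and stationarity of limit points, and finally invokes the PL inequality for convergence to minimizers. The only differences are presentational: the paper spells out the assumptions (including bounded iterates for the limit-point claim) and the descent/supermartingale steps that you defer to standard references.
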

\begin{proof}
See Appendix for the complete proof.
\end{proof}

\section{Experiments}
\paragraph{Datasets.}
We conducted experiments on six benchmark datasets. For images, we use CIFAR-10, CIFAR-100 \citep{krizhevsky2009learning}, and Mini-ImageNet \citep{vinyals2016matching}, which contain natural images spanning multiple object categories. For graphs, we use MUTAG, PROTEINS, and ENZYMES \citep{Morris2020TUDataset}, which involve molecular and protein structures for classification. Further dataset details are provided in the Appendix.

\paragraph{Architecture and Optimization.}
For image datasets (CIFAR-10/100 and Mini-ImageNet), we used a ResNet-18~\citep{he2016deep} backbone with a three-layer fully connected (FC) classifier head; the configuration is summarized in Table~\ref{tab:architectures_resnet}. For graph datasets (MUTAG, PROTEINS, ENZYMES), we used three graph convolutional layers with GCN~\citep{kipf2016semi} and GAT~\citep{velivckovic2017graph} backbones, followed by pooled features fed into a FC head with one, two, or three layers to study the effect of classifier depth; the configuration is summarized in Table~\ref{tab:architectures_gnn}.

\begin{table}[t]
\small
\centering
\setlength{\tabcolsep}{4.3mm}
\caption{Model Architecture for CIFAR-10/100 and Mini-ImageNet. Each layer is shown as [\emph{input dimension, output dimension}].}
\label{tab:architectures_resnet}
\vspace{1.5em}
\begin{tabular}{lc}
\toprule
Layer & \textbf{CIFAR-10/100/Mini-ImageNet} \\
\midrule
Backbone & ResNet-18\\
\midrule
Flatten & [512]\\
\midrule
FC1 & [512,\,512]\\
FC2 & [512,\,256]\\
FC3 & [256,\,$C$]\\
\midrule
Output & $C=10/100/100$ \\
\bottomrule
\end{tabular}
\end{table}

\begin{table}[t]
\centering
\small
\setlength{\tabcolsep}{0.1mm}
\caption{Model Architectures for Graph Datasets (MUTAG, PROTEINS, ENZYMES). Each layer is shown as [\emph{input dimension, output dimension}].}
\label{tab:architectures_gnn}
\vspace{1.5em}
\begin{tabular}{lcc}
\toprule
Layer & \textbf{MUTAG} & \textbf{PROTEINS/ENZYMES} \\
\midrule
Backbone & GCN/GAT & GCN/GAT \\
\midrule
Pooling & [64] & [128]  \\
\midrule
FC1 & [64,\,64]/[64,\,$C$] & [128,\,128]/[128,\,$C$] \\
FC2 (optional) & [64,\,64]/[64,\,$C$] & [128,\,128]/[128,\,$C$]  \\
FC3 (optional) & [64,\,$C$] & [128,\,$C$] \\
\midrule
Output & $C=2$ & $C=2/6$  \\
\bottomrule
\end{tabular}
\end{table}

\subsection{Study 1: Dynamics of Adaptive Learning Rates}
\label{sec:study1}
\begin{figure*}[t]
\centering
\includegraphics[width=.67\linewidth]{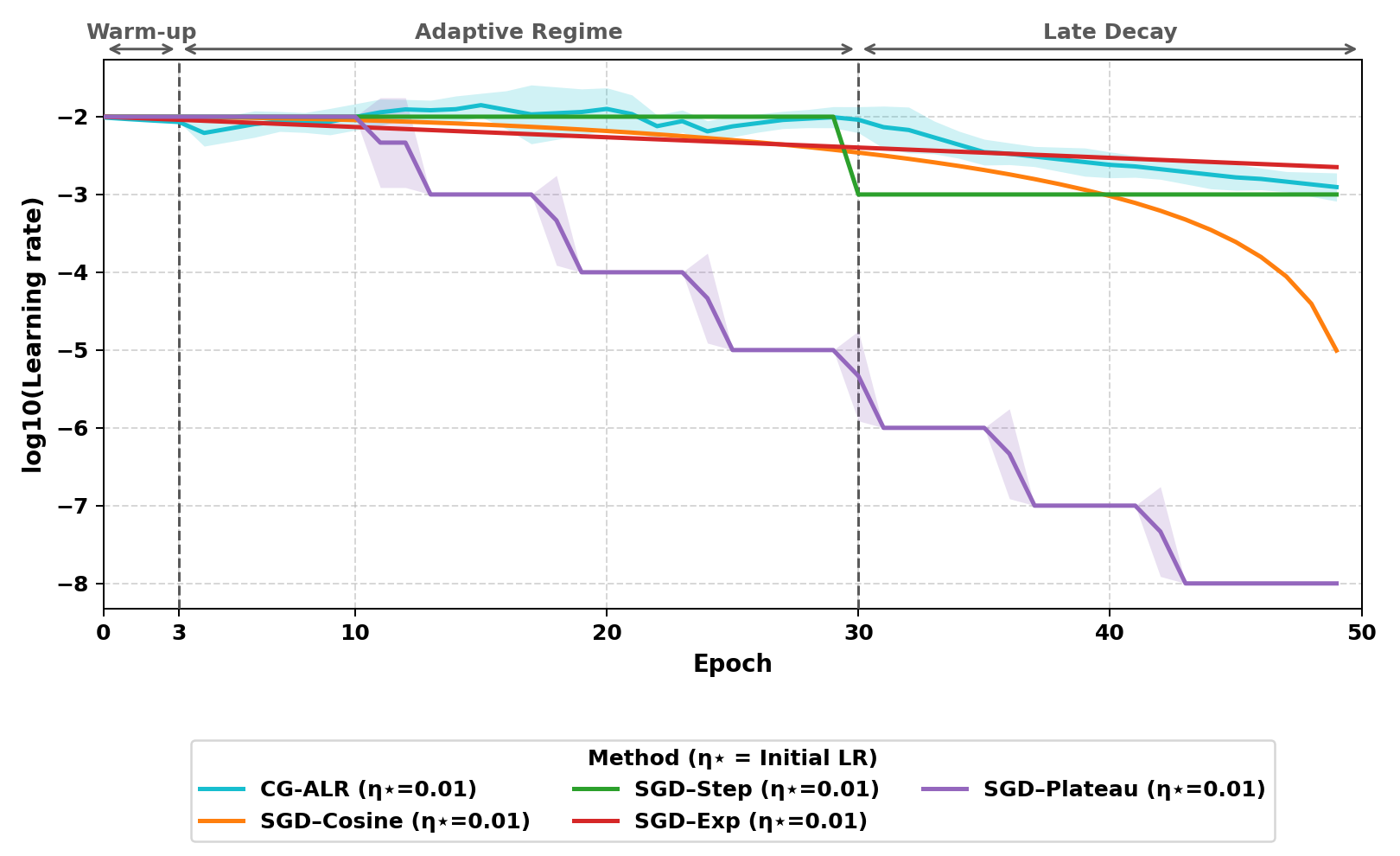}
\caption{Learning Rate Dynamics Across Epochs on CIFAR-10. Comparison of our proposed CG-ALR with TOP against standard schedules that require initial learning rates: SGD-Cosine, SGD-Step, SGD-Exp, and SGD-Plateau.}
\label{fig:study1_CIFAR-10}
\end{figure*}

This study analyzes the trajectory of adaptive learning rate dynamics with an initial value of $\eta^\star = 0.01$. In this study, we incorporate TOP into our CG-ALR and compare it against four standard SGD-based schedules that also require initial learning rates. As shown in Figure~\ref{fig:study1_CIFAR-10}, SGD-Step and SGD-Plateau reduce the learning rate by orders of magnitude early in training, while SGD-Cosine and SGD-Exp shrink it monotonically toward minimal values, slowing late-stage progress. In contrast, CG-ALR tracks representation reorganization via the connectome signal and produces a gradual, on-demand trajectory, with the learning rate remaining flat during the warm-up stage (first 3 epochs), adapting in the intermediate stage, and decaying only in the late stage (after 30 epochs), thereby avoiding premature annealing.

\begin{figure*}[t]
  \centering
  \includegraphics[width=\linewidth]{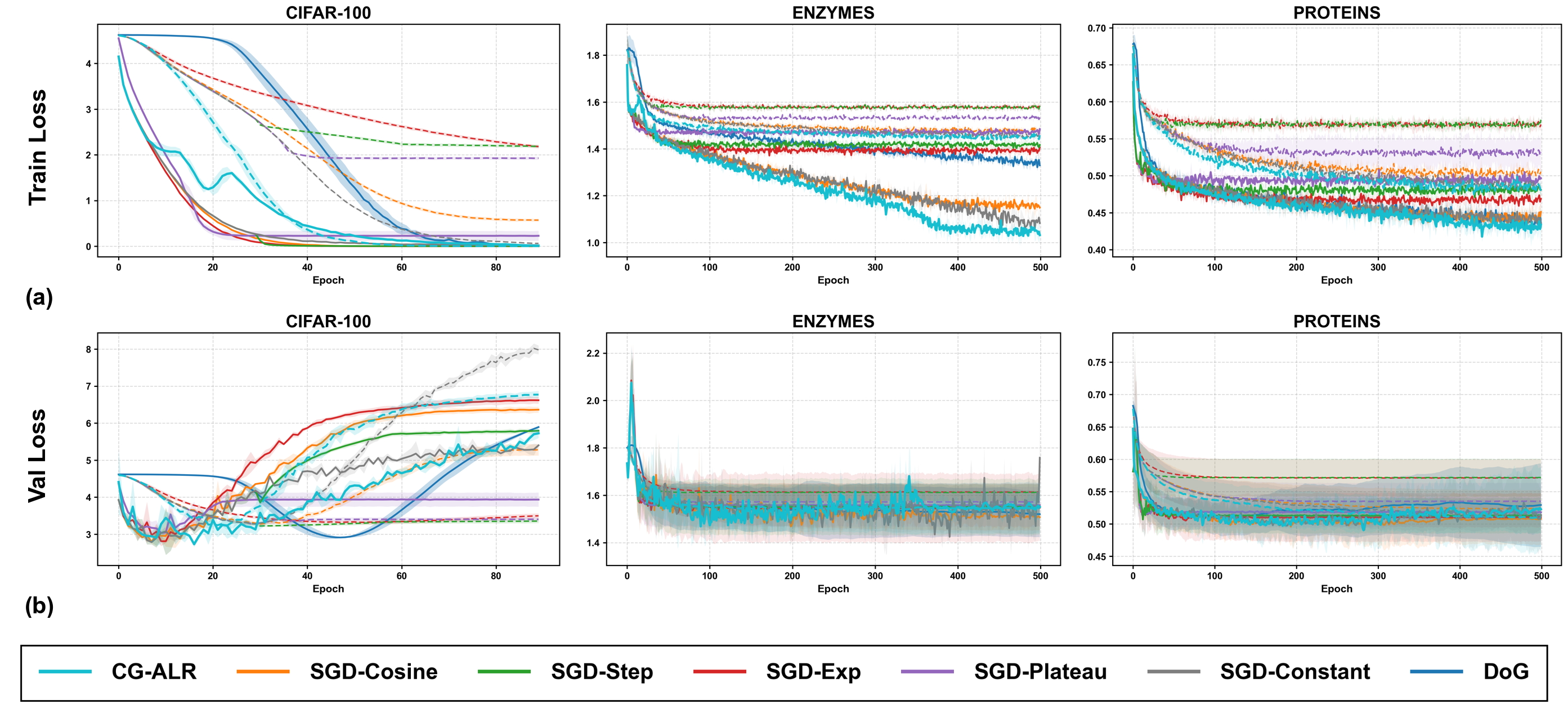}
  \vspace{0.6ex}
  \caption{
  (a) Training Loss on Three Datasets (\textbf{CIFAR-100}, \textbf{ENZYMES}, \textbf{PROTEINS}).
  Solid lines are the averages across seeds for the \emph{best} learning-rate choice from the predefined grid ($\eta^\star \!\in\!\{0.1,0.01,0.001\}$ for CG-ALR and SGD-based schedules; $\eta\!\in\!\{0.1,0.01,0.001\}$ for SGD-Constant; default hyperparameters for DoG).
  Dashed lines are the averages across seeds for the \emph{worst} choice from the same grid.
  Shaded bands denote the min–max range across three seeds.
  For CG-ALR, we use \textbf{TOP} on CIFAR-100 and \textbf{SWK} on ENZYMES/PROTEINS.
  (b) Validation Loss on the Same Three Datasets for the Same Set of Optimizers.
  }
  \label{fig:study2_val_train_loss}
\end{figure*}

\subsection{Study 2: Ablation Analyses}

In this study, we conduct ablation analyses to understand how design choices affect the performance of our CG-ALR. We focus on two important factors: (i) the number of FC layers used in the classifier head; and (ii) the size of the probe set used to compute the connectome, defined as a stratified subset of the training samples and denoted $P_{\text{probe}}$.

\paragraph{Effect of FC Layer Depth.}
\label{sec:layer_extraction}
We vary how connectomes are extracted by adjusting the number of FC layers used in the GCN model. To compare depths, we aggregate accuracy, generalization stability, and convergence efficiency into a single composite score. For each dataset and FC depth $d\in\{1,2,3\}$, let $\overline{m}_d$ denote the mean of metric $m$ across CG-ALR methods, seeds, and initial learning-rate values. We standardize each metric across depths within a dataset as
\[
z(\overline{m}_d)=
\frac{\overline{m}_d-\frac{1}{3}\sum_{d'=1}^{3}\overline{m}_{d'}}
{\sqrt{\frac{1}{3}\sum_{d'=1}^{3}\left(\overline{m}_{d'}-\frac{1}{3}\sum_{d''=1}^{3}\overline{m}_{d''}\right)^2}},
\]
with $z(\overline{m}_d)=0$ if the denominator is zero. The composite score for depth $d$ is then
\[
\mathrm{Composite}(d)=
\sum_{p\in\mathcal{P}} z(\overline{p}_d)
-\sum_{g\in\mathcal{G}} z(\overline{g}_d)
-\sum_{c\in\mathcal{C}} z(\overline{c}_d),
\]
where
\begin{itemize}
\item $\mathcal{P}$ (performance): test accuracy at best-validation epoch, test accuracy at final epoch, and maximum validation accuracy;
\item $\mathcal{G}$ (generalization): absolute test–val gap at best-validation epoch, test–val gap at final epoch, train–val gap (accuracy) at final epoch, train–val gap (loss) at final epoch;
\item $\mathcal{C}$ (convergence): wall-clock time to reach max validation accuracy.
\end{itemize}
Higher values indicate better overall performance (higher accuracy, smaller generalization gaps, and faster convergence).

Across datasets, Table~\ref{tab:composite} shows that a single FC layer is optimal for MUTAG, while two FC layers consistently perform best on ENZYMES and PROTEINS. Adding a third FC layer does not provide additional gains, indicating that efficient training does not require deeper stacks. Instead, constructing the target connectome with the appropriate FC depth is sufficient to capture representational dynamics and enable strong, efficient performance.

\begin{table}[t]
\small
\centering
\setlength{\tabcolsep}{1.8mm}
\caption{Composite Scores Across Datasets and FC Layers. Higher values mean better overall performance; \textbf{bold} marks the best score for each dataset.}
\label{tab:composite}
\vspace{1.5em}
\begin{tabular}{lccc}
\toprule
& \multicolumn{3}{c}{\textbf{FC Layer(s) Depth}} \\
\cmidrule(lr){2-4}
Dataset & 1 & 2  & 3  \\

\midrule
MUTAG    &  \textbf{7.043} & 3.957 &  -10.999 \\
ENZYMES  & -3.290        & \textbf{4.721} &  -1.432 \\
PROTEINS & -2.160         & \textbf{3.383} & -1.224\\
\bottomrule
\end{tabular}
\end{table}

\begin{table}[t]
\small
\centering
\setlength{\tabcolsep}{2mm}
\caption{Composite Scores Across Probe Set Sizes for CIFAR-10. Higher values mean better overall performance; \textbf{bold} marks the best score.}
\label{tab:composite_probe}
\vspace{1.5em}
\begin{tabular}{lcccc}
\toprule
& \multicolumn{4}{c}{\textbf{Probe Set Size}} \\
\cmidrule(lr){2-5}
Dataset & 64 & 256 & 1024 & 2096 \\
\midrule
CIFAR-10 & -2.50 & -1.55 & \textbf{7.55} & -3.50 \\
\bottomrule
\end{tabular}
\end{table}

\paragraph{Effect of Probe Set Size.}
\label{sec:probe_set}

We investigate the role of probe set size in CG-ALR, as using more samples to construct the connectomes improves the precision of the topology-change signal $z_t$, but also increases computational cost. To compare sizes, we fix the default training configuration (ResNet-18 backbone with FC depth $d{=}3$), and vary only the probe set size $P_{\text{probe}} \in \{64,\,256,\,1024,\,2096\}$. We then use the same composite score to compare the performance on different probe set sizes. As shown in Table~\ref{tab:composite_probe}, the performance reaches a clear peak when $P_{\text{probe}=1024}$. This indicates that efficient training does not require a larger probe set. Based on this, we set  $P_{\text{probe}=1024}$ as the default in subsequent experiments.

\begin{table*}[t]
\centering
\caption{Median RED with 95\% Percentile-Bootstrap CIs Across Seeds. 
\textbf{Bold} entries indicate cases where CG-ALR achieves better generalization performance ($\mathrm{RED}<0$), 
or statistically significant improvements when the 95\% CI lies entirely below 0. 
Results are shown side by side for CIFAR-10 and CIFAR-100.}
\vspace{1.5em}
\label{tab:study2_c10_c100}

\customsmall
\setlength{\tabcolsep}{9pt}
\begin{tabular}{@{}llrr@{}}
\toprule
\multicolumn{1}{c}{} & \multicolumn{1}{c}{} &
\multicolumn{1}{c}{\textbf{CIFAR-10}} &
\multicolumn{1}{c}{\textbf{CIFAR-100}} \\
\cmidrule(lr){3-4}
\multicolumn{1}{c}{\textbf{CG-ALR}} &
\multicolumn{1}{c}{\textbf{Baseline}} & \multicolumn{1}{c}{RED (95\% CI)} & \multicolumn{1}{c}{RED (95\% CI)} \\
\midrule
BD  & DoG          & \textbf{-0.255}~(\,-0.421,\;0.046\,) & \textbf{-0.143}~(\,\textbf{-0.192},\,\textbf{-0.015}\,) \\
BD  & SGD-Cosine   & \textbf{-0.125}~(\,\textbf{-0.238},\;\textbf{-0.032}\,) & \textbf{-0.096}~(\,\textbf{-0.175},\,\textbf{-0.009}\,) \\
BD  & SGD-Exp      & \textbf{-0.143}~(\,\textbf{-0.226},\;\textbf{-0.028}\,) & \textbf{-0.141}~(\,\textbf{-0.180},\,\textbf{-0.021}\,) \\
BD  & SGD-Plateau  & \textbf{-0.165}~(\,\textbf{-0.194},\;\textbf{-0.023}\,) & \textbf{-0.107}~(\,\textbf{-0.150},\,\textbf{-0.013}\,) \\
BD  & SGD-Step     & \textbf{-0.102}~(\,-\textbf{0.211},\;\textbf{-0.021}\,) & \textbf{-0.090}~(\,\textbf{-0.157},\,\textbf{-0.021}\,) \\
BD  & SGD-Constant & \textbf{-0.142}~(\,\textbf{-0.229},\;\textbf{-0.101}\,) & \textbf{-0.107}~(\,\textbf{-0.164},\,\textbf{-0.058}\,) \\
HK  & DoG          & \textbf{-0.202}~(\,-0.448,\;0.025\,) & \textbf{-0.135}~(\,\textbf{-0.186},\,\textbf{-0.010}\,) \\
HK  & SGD-Cosine   & \textbf{-0.094}~(\,\textbf{-0.260},\;\textbf{-0.047}\,) & \textbf{-0.075}~(\,\textbf{-0.177},\,\textbf{-0.006}\,) \\
HK  & SGD-Exp      & \textbf{-0.106}~(\,\textbf{-0.225},\;\textbf{-0.068}\,) & \textbf{-0.121}~(\,\textbf{-0.190},\,\textbf{-0.012}\,) \\
HK  & SGD-Plateau  & \textbf{-0.122}~(\,\textbf{-0.213},\;\textbf{-0.058}\,) & \textbf{-0.093}~(\,\textbf{-0.136},\,\textbf{-0.002}\,) \\
HK  & SGD-Step     & \textbf{-0.074}~(\,\textbf{-0.233},\;\textbf{-0.055}\,) & \textbf{-0.073}~(\,\textbf{-0.147},\,\textbf{-0.018}\,) \\
HK  & SGD-Constant & \textbf{-0.120}~(\,\textbf{-0.252},\;\textbf{-0.058}\,) & \textbf{-0.081}~(\,\textbf{-0.146},\,\textbf{-0.034}\,) \\
SWK & DoG          & \textbf{-0.241}~(\,\textbf{-0.415},\;\textbf{-0.014}\,) & \textbf{-0.153}~(\,\textbf{-0.189},\,\textbf{-0.012}\,) \\
SWK & SGD-Cosine   & \textbf{-0.112}~(\,\textbf{-0.216},\;\textbf{-0.027}\,) & \textbf{-0.096}~(\,\textbf{-0.182},\,\textbf{-0.006}\,) \\
SWK & SGD-Exp      & \textbf{-0.124}~(\,\textbf{-0.289},\;\textbf{-0.023}\,) & \textbf{-0.141}~(\,\textbf{-0.188},\,\textbf{-0.021}\,) \\
SWK & SGD-Plateau  & \textbf{-0.152}~(\,\textbf{-0.218},\;\textbf{-0.019}\,) & \textbf{-0.110}~(\,\textbf{-0.158},\,\textbf{-0.010}\,) \\
SWK & SGD-Step     & \textbf{-0.081}~(\,\textbf{-0.237},\;\textbf{-0.017}\,) & \textbf{-0.096}~(\,\textbf{-0.163},\,\textbf{-0.019}\,) \\
SWK & SGD-Constant & \textbf{-0.133}~(\,\textbf{-0.256},\;\textbf{-0.102}\,) & \textbf{-0.107}~(\,\textbf{-0.171},\,\textbf{-0.052}\,) \\
TOP & DoG          & \textbf{-0.184}~(\,-0.410,\;0.025\,) & \textbf{-0.153}~(\,\textbf{-0.189},\,\textbf{-0.006}\,) \\
TOP & SGD-Cosine   & \textbf{-0.078}~(\,\textbf{-0.216},\;\textbf{-0.041}\,) & \textbf{-0.096}~(\,\textbf{-0.160},\,\textbf{-0.027}\,) \\
TOP & SGD-Exp      & \textbf{-0.101}~(\,\textbf{-0.289},\;\textbf{-0.019}\,) & \textbf{-0.141}~(\,\textbf{-0.166},\,\textbf{-0.021}\,) \\
TOP & SGD-Plateau  & \textbf{-0.105}~(\,\textbf{-0.218},\;\textbf{-0.015}\,) & \textbf{-0.110}~(\,\textbf{-0.136},\,\textbf{-0.010}\,) \\
TOP & SGD-Step     & \textbf{-0.059}~(\,\textbf{-0.237},\;\textbf{-0.013}\,) & \textbf{-0.096}~(\,\textbf{-0.144},\,\textbf{-0.015}\,) \\
TOP & SGD-Constant & \textbf{-0.117}~(\,\textbf{-0.256},\;\textbf{-0.040}\,) & \textbf{-0.107}~(\,\textbf{-0.149},\,\textbf{-0.034}\,) \\
WD  & DoG          & \textbf{-0.241}~(\,-0.415,\;0.025\,) & \textbf{-0.153}~(\,\textbf{-0.189},\,\textbf{-0.015}\,) \\
WD  & SGD-Cosine   & \textbf{-0.112}~(\,\textbf{-0.265},\;\textbf{-0.027}\,) & \textbf{-0.096}~(\,\textbf{-0.182},\,\textbf{-0.006}\,) \\
WD  & SGD-Exp      & \textbf{-0.124}~(\,\textbf{-0.289},\;\textbf{-0.023}\,) & \textbf{-0.141}~(\,\textbf{-0.188},\,\textbf{-0.021}\,) \\
WD  & SGD-Plateau  & \textbf{-0.152}~(\,\textbf{-0.218},\;\textbf{-0.019}\,) & \textbf{-0.110}~(\,\textbf{-0.136},\,\textbf{-0.010}\,) \\
WD  & SGD-Step     & \textbf{-0.081}~(\,\textbf{-0.237},\;\textbf{-0.017}\,) & \textbf{-0.096}~(\,\textbf{-0.163},\,\textbf{-0.019}\,) \\
WD  & SGD-Constant & \textbf{-0.133}~(\,\textbf{-0.189},\;\textbf{-0.095}\,) & \textbf{-0.107}~(\,\textbf{-0.146},\,\textbf{-0.052}\,) \\
\bottomrule
\end{tabular}
\end{table*}

\begin{table*}[t]
\centering
\caption{Median RED with 95\% Percentile-Bootstrap CIs Across Seeds. 
\textbf{Bold} entries indicate cases where CG-ALR achieves better generalization performance ($\mathrm{RED}<0$), 
or statistically significant improvements when the 95\% CI lies entirely below 0. 
Results are shown for GCN and GAT, with ENZYMES and PROTEINS reported beneath each method.}
\label{tab:study2_enz_pro_gcn_gat_merged}
\vspace{1.5em}
\customsmall
\setlength{\tabcolsep}{2pt}
\begin{tabular}{@{}llrrrr@{}}
\toprule
\multicolumn{2}{c}{} & \multicolumn{2}{c}{\textbf{GCN}} & \multicolumn{2}{c}{\textbf{GAT}} \\
\cmidrule(lr){3-4}\cmidrule(lr){5-6}
\multicolumn{2}{c}{} &
\multicolumn{1}{c}{\textbf{ENZYMES}} &
\multicolumn{1}{c}{\textbf{PROTEINS}} &
\multicolumn{1}{c}{\textbf{ENZYMES}} &
\multicolumn{1}{c}{\textbf{PROTEINS}} \\
\cmidrule(lr){3-6}
\multicolumn{1}{c}{\textbf{CG-ALR}} &
\multicolumn{1}{c}{\textbf{Baseline}} &
\multicolumn{1}{c}{RED (95\% CI)} &
\multicolumn{1}{c}{RED (95\% CI)} &
\multicolumn{1}{c}{RED (95\% CI)} &
\multicolumn{1}{c}{RED (95\% CI)} \\
\midrule
BD  & DoG          & 0.090~(\,0.060,\;0.218\,) & \textbf{-0.014}~(\,-0.104,\;0.053\,) & 0.106~(\,0.013,\;0.248\,) & 0.037~(\,-0.053,\;0.114\,) \\
BD  & SGD-Cosine   & \textbf{-0.034}~(\,-0.119,\;0.026\,) & 0.011~(\,-0.014,\;0.052\,) & 0.000~(\,-0.052,\;0.082\,) & \textbf{-0.016}~(\,-0.200,\;0.049\,) \\
BD  & SGD-Exp      & \textbf{-0.167}~(\,\textbf{-0.230},\;\textbf{-0.090}\,) & \textbf{-0.095}~(\,\textbf{-0.125},\;\textbf{-0.061}\,) & \textbf{-0.099}~(\,\textbf{-0.190},\;\textbf{-0.040}\,) & \textbf{-0.037}~(\,-0.233,\;0.024\,) \\
BD  & SGD-Plateau  & \textbf{-0.128}~(\,\textbf{-0.304},\;\textbf{-0.022}\,) & \textbf{-0.050}~(\,-0.125,\;0.000\,) & \textbf{-0.110}~(\,-0.190,\;0.000\,) & \textbf{-0.071}~(\,-0.233,\;0.000\,) \\
BD  & SGD-Step     & \textbf{-0.192}~(\,\textbf{-0.299},\;\textbf{-0.101}\,) & \textbf{-0.070}~(\,-0.104,\;0.000\,) & \textbf{-0.110}~(\,-0.222,\;0.000\,) & \textbf{-0.098}~(\,\textbf{-0.200},\;\textbf{-0.012}\,) \\
BD  & SGD-Constant & 0.000~(\,-0.026,\;0.095\,) & 0.013~(\,-0.026,\;0.056\,) & 0.040~(\,-0.012,\;0.047\,) & 0.000~(\,-0.037,\;0.040\,) \\
HK  & DoG          & 0.088~(\,0.042,\;0.203\,) & \textbf{-0.082}~(\,-0.214,\;0.014\,) & 0.088~(\,0.035,\;0.248\,) & 0.037~(\,0.014,\;0.132\,) \\
HK  & SGD-Cosine   & \textbf{-0.054}~(\,-0.155,\;0.000\,) & \textbf{-0.032}~(\,-0.061,\;0.011\,) & 0.000~(\,-0.023,\;0.049\,) & 0.011~(\,-0.100,\;0.047\,) \\
HK  & SGD-Exp      & \textbf{-0.167}~(\,\textbf{-0.282},\;\textbf{-0.038}\,) & \textbf{-0.125}~(\,\textbf{-0.214},\;\textbf{-0.094}\,) & \textbf{-0.085}~(\,\textbf{-0.190},\;\textbf{-0.010}\,) & \textbf{-0.023}~(\,-0.139,\;0.012\,) \\
HK  & SGD-Plateau  & \textbf{-0.158}~(\,-0.304,\;0.000\,) & \textbf{-0.121}~(\,\textbf{-0.200},\;\textbf{-0.026}\,) & \textbf{-0.148}~(\,-0.188,\;0.000\,) & \textbf{-0.047}~(\,-0.125,\;0.011\,) \\
HK  & SGD-Step     & \textbf{-0.167}~(\,\textbf{-0.254},\;\textbf{-0.101}\,) & \textbf{-0.125}~(\,-0.214,\;0.000\,) & \textbf{-0.165}~(\,-0.198,\;0.000\,) & \textbf{-0.071}~(\,\textbf{-0.129},\;\textbf{-0.025}\,) \\
HK  & SGD-Constant & 0.000~(\,-0.036,\;0.154\,) & \textbf{-0.014}~(\,-0.113,\;0.011\,) & 0.012~(\,-0.020,\;0.071\,) & 0.015~(\,0.000,\;0.037\,) \\
SWK & DoG          & 0.137~(\,-0.046,\;0.165\,) & \textbf{-0.014}~(\,-0.145,\;0.053\,) & 0.136~(\,0.073,\;0.248\,) & 0.028~(\,-0.078,\;0.122\,) \\
SWK & SGD-Cosine   & \textbf{-0.034}~(\,-0.176,\;0.026\,) & 0.011~(\,-0.041,\;0.072\,) & 0.000~(\,-0.010,\;0.114\,) & \textbf{-0.016}~(\,-0.114,\;0.035\,) \\
SWK & SGD-Exp      & \textbf{-0.152}~(\,\textbf{-0.309},\;\textbf{-0.066}\,) & \textbf{-0.070}~(\,-0.174,\;0.014\,) & \textbf{-0.040}~(\,-0.169,\;0.000\,) & \textbf{-0.023}~(\,-0.286,\;0.000\,) \\
SWK & SGD-Plateau  & \textbf{-0.128}~(\,\textbf{-0.324},\;\textbf{-0.011}\,) & \textbf{-0.049}~(\,-0.151,\;0.012\,) & \textbf{-0.080}~(\,-0.148,\;0.000\,) & \textbf{-0.059}~(\,-0.139,\;0.000\,) \\
SWK & SGD-Step     & \textbf{-0.192}~(\,\textbf{-0.338},\;\textbf{-0.077}\,) & \textbf{-0.049}~(\,-0.126,\;0.012\,) & \textbf{-0.088}~(\,-0.149,\;0.000\,) & \textbf{-0.071}~(\,\textbf{-0.286},\;\textbf{-0.023}\,) \\
SWK & SGD-Constant & 0.011~(\,-0.024,\;0.110\,) & 0.013~(\,-0.013,\;0.027\,) & 0.040~(\,-0.012,\;0.080\,) & 0.000~(\,0.000,\;0.013\,) \\
TOP & DoG          & 0.150~(\,-0.076,\;0.250\,) & \textbf{-0.091}~(\,-0.214,\;0.027\,) & 0.170~(\,0.012,\;0.248\,) & 0.055~(\,-0.013,\;0.132\,) \\
TOP & SGD-Cosine   & \textbf{-0.055}~(\,-0.242,\;0.025\,) & \textbf{-0.014}~(\,-0.076,\;0.030\,) & 0.000~(\,-0.066,\;0.060\,) & 0.000~(\,-0.071,\;0.035\,) \\
TOP & SGD-Exp      & \textbf{-0.152}~(\,\textbf{-0.379},\;\textbf{-0.066}\,) & \textbf{-0.086}~(\,\textbf{-0.266},\;\textbf{-0.027}\,) & \textbf{-0.072}~(\,-0.190,\;0.000\,) & \textbf{-0.026}~(\,-0.154,\;0.000\,) \\
TOP & SGD-Plateau  & \textbf{-0.190}~(\,\textbf{-0.318},\;\textbf{-0.023}\,) & \textbf{-0.121}~(\,-0.200,\;0.000\,) & \textbf{-0.067}~(\,-0.202,\;0.000\,) & \textbf{-0.064}~(\,\textbf{-0.154},\;\textbf{-0.023}\,) \\
TOP & SGD-Step     & \textbf{-0.153}~(\,\textbf{-0.348},\;\textbf{-0.036}\,) & \textbf{-0.061}~(\,\textbf{-0.214},\;\textbf{-0.011}\,) & \textbf{-0.112}~(\,-0.190,\;0.000\,) & \textbf{-0.071}~(\,-0.154,\;0.000\,) \\
TOP & SGD-Constant & 0.000~(\,-0.024,\;0.167\,) & \textbf{-0.025}~(\,-0.109,\;0.025\,) & 0.024~(\,-0.020,\;0.146\,) & 0.000~(\,0.000,\;0.068\,) \\
WD  & DoG          & 0.137~(\,0.074,\;0.209\,) & \textbf{-0.014}~(\,-0.118,\;0.056\,) & 0.136~(\,0.057,\;0.198\,) & 0.060~(\,-0.113,\;0.132\,) \\
WD  & SGD-Cosine   & \textbf{-0.047}~(\,-0.171,\;0.164\,) & 0.014~(\,-0.038,\;0.072\,) & 0.000~(\,-0.010,\;0.114\,) & \textbf{-0.023}~(\,-0.071,\;0.035\,) \\
WD  & SGD-Exp      & \textbf{-0.123}~(\,-0.300,\;0.051\,) & \textbf{-0.070}~(\,-0.118,\;0.014\,) & \textbf{-0.040}~(\,-0.169,\;0.000\,) & \textbf{-0.048}~(\,-0.286,\;0.000\,) \\
WD  & SGD-Plateau  & \textbf{-0.106}~(\,\textbf{-0.324},\;\textbf{-0.023}\,) & \textbf{-0.042}~(\,\textbf{-0.118},\;\textbf{-0.013}\,) & \textbf{-0.080}~(\,-0.148,\;0.000\,) & \textbf{-0.051}~(\,\textbf{-0.286},\;\textbf{-0.016}\,) \\
WD  & SGD-Step     & \textbf{-0.153}~(\,-0.271,\;0.000\,) & \textbf{-0.023}~(\,-0.095,\;0.000\,) & \textbf{-0.088}~(\,-0.149,\;0.000\,) & \textbf{-0.071}~(\,\textbf{-0.274},\;\textbf{-0.023}\,) \\
WD  & SGD-Constant & 0.011~(\,-0.015,\;0.110\,) & 0.013~(\,0.000,\;0.041\,) & 0.040~(\,-0.012,\;0.080\,) & 0.000~(\,-0.097,\;0.013\,) \\
\bottomrule
\end{tabular}
\end{table*}

\subsection{Study 3: Performance Comparisons}
This study evaluates the overall effectiveness of CG-ALR across image and graph classification tasks, benchmarking it against a range of widely-used optimization baselines.
For the image datasets, we train ResNet-18 from scratch; for the graph datasets, we train GCN and GAT from scratch; and then compute connectomes from the neuron co-activations of an FC layer. For the methods described in Section~\ref{sec:study1}, we consider initial learning rates $\eta^\star \in \{0.1,\,0.01,\,0.001\}$, a widely-adopted range for these tasks. For SGD-Constant, we use $\eta \in \{0.1,\,0.01,\,0.001\}$. For DoG, we adopt the default hyperparameters $r_{\varepsilon}=10^{-6}$ and $\varepsilon=10^{-8}$. Further implementation details are provided in the Appendix.

Figure~\ref{fig:study2_val_train_loss} reports training and validation loss for three representative datasets (CIFAR-100, ENZYMES, and PROTEINS), with the remaining datasets shown in the Appendix. CG-ALR consistently accelerates optimization: its training loss nearly always decreases the fastest and reaches the lowest level across schedules. On the two graph datasets, CG-ALR also achieves the lowest validation loss both throughout training and at convergence. Moreover, for CG-ALR the gap between the best and worst initial learning rates (i.e., the strongest and weakest choices within $\{0.1,\,0.01,\,0.001\}$) is consistently smaller than for the baselines, suggesting that CG-ALR provides more robust performance across different initial learning rates.

Since raw performance metrics vary substantially across methods, aggregate comparisons are challenging. To address this, we use the \emph{relation error difference} (RED)~\citep{ivgi2023dog} as a normalized measure of performance differences. Given the test accuracy $acc$ (measured at the epoch with the highest validation accuracy), we define the test error as $err = 1 - acc$. Let $err_{\mathrm{CG-ALR}}$ denote the error when trained with CG-ALR. Then,  
\[
\operatorname{RED}(\mathrm{err}_x,\mathrm{err}_{\mathrm{CG-ALR}}) \;=\; \frac{\mathrm{err}_{\mathrm{CG-ALR}}-\mathrm{err}_x}{\mathrm{err}_{\mathrm{CG-ALR}}},
\]  
where $x$ represents a baseline method.
Negative RED indicates that CG-ALR's generalization performance outperforms the baseline (lower test error, higher test accuracy), while positive RED indicates the opposite.
We also report confidence intervals (CIs) to quantify uncertainty and assess reliability. Specifically, we compute 95\% CIs for the median RED using a seed-level percentile bootstrap with 1,000 resamples (resampling seeds with replacement and recomputing the median). CIs entirely below 0 indicate that CG-ALR is statistically significantly outperforming the baselines, CIs entirely above 0 indicate the baseline is statistically significantly better, and CIs overlapping 0 are inconclusive.

As summarized in Table~\ref{tab:study2_c10_c100} and Table~\ref{tab:study2_enz_pro_gcn_gat_merged}, when CG-ALR outperforms the baselines (negative RED), many CIs lie entirely below 0, indicating statistically significant improvements in generalization. By contrast, in cases where baselines achieve higher average RED values (positive RED), their CIs rarely lie entirely above 0, suggesting that even when baselines appear to generalize better than CG-ALR, the evidence is generally not statistically significant. Overall, CG-ALR achieves superior generalization performance to baselines in most cases, with stronger statistical support for its advantages. Additional experimental results are provided in the Appendix.

\section{Broader Impact}
Our work demonstrates that a network's internal functional connectome can serve as a principled and interpretable signal for guiding optimizer learning-rate decisions, providing an alternative to hand-crafted schedules and loss-only heuristics. By linking topological signals to learning-rate updates, we introduce a mechanism whose behavior can be directly traced to representation reorganization, offering a transparent basis for understanding why the optimizer adjusts step sizes at different stages of training. This interpretability also supports greater trustworthiness, as practitioners can diagnose learning dynamics in terms of measurable structural changes rather than opaque heuristics. Our experiments on image and graph benchmarks validate the effectiveness of this approach, and point toward the broader potential of connectome-guided signals for principled training across diverse domains. Looking forward, this line of research has the potential to make optimization not only more efficient, but also more explainable and reliable, with broad impact on the development of future machine learning systems.

\bibliographystyle{plainnat} 
\bibliography{reference}

\clearpage
\appendix
\thispagestyle{empty}

\onecolumn
\aistatstitle{Supplementary Materials}

\section{RELATED WORK}

Learning rate control in stochastic optimization has long relied on handcrafted schedules such as step decay, exponential decay, cosine annealing, and reduction on plateau; these schedules are primarily time-based or loss-triggered and usually require careful tuning to the model and dataset~\citep{goodfellow2016deep,pytorch2023lr}. Adaptive optimizers such as RMSProp and Adam use gradient statistics to temper this sensitivity; however, they remain hyperparameter dependent and do not directly track how internal representations evolve during training~\citep{tieleman2012rmsprop,kingma2015adam}. Parameter-free methods such as Distance over Gradients (DoG) scale the step size using the ratio of parameter displacement to accumulated gradient norms. This removes explicit schedules while still relying on first-order magnitudes and may miss important phases of internal reorganization~\citep{ivgi2023dog}.

Topological data analysis provides a complementary view. Persistent homology yields stable summaries of structure with Wasserstein and related metrics, and kernelized constructions such as the sliced Wasserstein and heat kernels enable efficient comparisons of evolving topological signals~\citep{cohen2007stability,reininghaus2015stable,carriere2017sliced,skraba2020wasserstein}. Recent work introduces functional connectomes for artificial networks, bringing coactivation graphs and persistent graph homology to the study of representation dynamics~\citep{songdechakraiwut2025functional,songdechakraiwut2023wasserstein}. Our approach builds at this intersection. We couple a Robbins and Monro envelope~\citep{robbins1951sgd} with a topology-driven controller that monitors connectome change through persistent homology distances, for example, Wasserstein comparisons between successive connectomes, so that learning rate adjustments respond to representation reconfiguration rather than to time or loss alone.

\section{TOPOLOGICAL CONNECTOME DISTANCES}
\subsection{Bottleneck Distance (BD)}\label{sec:BD}

The bottleneck distance measures the worst-case discrepancy between two persistence diagrams.  
A persistence diagram $D$ is a multiset of points $(b,d)$ encoding birth and death times of topological features.  
To compare diagrams $D_1$ and $D_2$, consider matchings $\gamma$ that, for every point $p\in D_1\cup D_2$, pair $p$ either with a point in the other diagram or with its projection onto the diagonal $\Delta=\{(x,x)\}$.  
Each point is matched exactly once, and images under $\gamma$ are unique.

The bottleneck distance is
\[
d_B(D_1,D_2)=\inf_{\gamma}\ \sup_{p\in D_1\cup D_2}\ \|p-\gamma(p)\|_{\infty},
\]
where $\|(b_1,d_1)-(b_2,d_2)\|_{\infty}=\max\{|b_1-b_2|,|d_1-d_2|\}$ for point-to-point matches, and for a match to the diagonal one has
\[
\bigl\|(b,d)-\Pi_\Delta(b,d)\bigr\|_{\infty}=\tfrac{1}{2}(d-b),
\]
which is the minimal $\ell_\infty$ distance from $(b,d)$ to $\Delta$.  
Intuitively, $d_B$ reports the largest single adjustment needed to transform one diagram into the other and is a standard stability metric in persistent homology.

\subsection{Heat Kernel (HK)} \label{HK}

The heat kernel defines a positive-definite similarity on persistence diagrams~\citep{reininghaus2015stable}.  
For diagrams $D_1$ and $D_2$, and points $p=(b_p,d_p)\in D_1$, $q=(b_q,d_q)\in D_2$, denote their reflections across the diagonal by $\bar p=(d_p,b_p)$ and $\bar q=(d_q,b_q)$.  
With bandwidth $\sigma>0$, the kernel is
\[
k_\sigma(D_1,D_2)=\frac{1}{8\pi\sigma}\!\!\sum_{\substack{p\in D_1\\ q\in D_2}}\!\!
\bigl[
e^{-\frac{\|p-q\|^2}{8\sigma}}
- e^{-\frac{\|p-\bar q\|^2}{8\sigma}}
- e^{-\frac{\|\bar p-q\|^2}{8\sigma}}
+ e^{-\frac{\|\bar p-\bar q\|^2}{8\sigma}}
\bigr].
\]
The four terms jointly account for interactions with diagonal reflections, which suppress the influence of low-persistence features and yield a symmetric, positive-definite kernel in a reproducing kernel Hilbert space.

\subsection{Sliced Wasserstein Kernel (SWK)} \label{SWK}

The sliced Wasserstein construction combines optimal transport with kernel methods while remaining computationally tractable~\cite{carriere2017sliced}.  
Given diagrams $D_1,D_2$ and order $p\ge 1$, the sliced Wasserstein distance is
\[
\mathrm{SW}_p^p(D_1,D_2)=\frac{1}{\pi}\int_{0}^{\pi} W_p^p\!\bigl(\theta_* D_1,\ \theta_* D_2\bigr)\, d\theta,
\]
where $\theta_*D$ denotes the projection of all points in $D$ onto the line through the origin with angle $\theta$, and $W_p$ is the $p$-Wasserstein distance on the real line.  
A Gaussian kernel built from this distance is
\[
k_{\mathrm{SW}}(D_1,D_2)=\exp\!\left(-\frac{\mathrm{SW}_p^p(D_1,D_2)}{2\tau}\right),
\]
with scale parameter $\tau>0$.

\subsection{Implementation Details for Topological Connectome Distances}
\label{Implementation}

\paragraph{Feature tap and probe set.}
Each epoch uses a fixed probe size $P=\texttt{PROBE\_P}$ for \{\textsc{TOP}, \textsc{WD}, \textsc{SWK}, \textsc{BD}, \textsc{HK}\} (baselines set $P{=}0$).  
Layer/features are obtained via \texttt{activations} (graphs) or \texttt{extract\_features} (images).  
We form a robust absolute correlation $\mathbf{S}\in[0,1]^{P\times P}$ using \texttt{robust\_corr} and zero the diagonal via \texttt{correlation\_graph}.

\paragraph{\textsc{TOP} (graph–filtration vector).}
From $\mathbf{S}$, we compute an MST on $\,\mathbf{1}-\mathbf{S}$, remove its edges, take the strict upper triangle of the remainder, keep positive entries, and sort descending; implemented by \texttt{adj2pers}.

\paragraph{\textsc{WD}/\textsc{SWK}/\textsc{BD}/\textsc{HK} (Vietoris–Rips on $H_1$).}
We set $\mathbf{D}=\mathbf{1}-|\mathrm{corr}|$ (zero diagonal) and run Vietoris–Rips with \texttt{compute\_persistence\_diagram\_from\_correlation} (\(H_1\) only).  
Epoch distances use \texttt{persim\_wasserstein} (\textsc{WD}, $p=\texttt{pd\_w\_p}$), \texttt{sliced\_wasserstein} (\textsc{SWK}, $M=\texttt{swk\_K}$, $p=\texttt{swk\_p}$), \texttt{bottleneck} (\textsc{BD}), and \texttt{heat} (\textsc{HK}); we fix \(H_1\) with unit weight.

\paragraph{Topological time series.}
Let $\mathcal{S}_e$ be the epoch-$e$ signature (GF vector or \(H_1\) PD).  
We set $W_0{=}0$ and for \textsc{TOP} compute $W_e = W_1(\text{sorted GF}_e,\text{sorted GF}_{e-1})$ via \texttt{wasserstein\_distance};  
for PDs, $W_e$ is the chosen \texttt{persim} distance above.  
The series $\{W_e\}$ is consumed within \texttt{train\_model}.

\section{THEOREM 1 PROOFS}

\paragraph{Setup.}
We minimize the population objective
\[
f(\theta)\coloneqq \mathbb{E}_{\xi}[\ell(\theta;\xi)]
\]
over $\theta\in\mathbb{R}^d$. At (global) SGD step $t\in\mathbb{N}_0$, the iterate is updated by
\begin{equation}\label{eq:update-app}
\theta_{t+1}=\theta_t-\eta_t\,g_t,\qquad 
g_t=\nabla \ell(\theta_t;\xi_t),
\end{equation}
with a stepsize of the form
\begin{equation}\label{eq:stepsize-app}
\eta_t=\psi_{e(t)-1}^{-1}\,\tilde\eta_t,
\qquad 
\tilde\eta_t=\frac{\eta_0}{(t+t_0)^\alpha},
\end{equation}
where $\eta_0>0$, $t_0\ge 1$, and $\alpha\in(1/2,1]$. Training is organized in epochs indexed by $e\in\mathbb{N}_0$, with integer boundaries
$0=T_0<T_1<T_2<\cdots$ and epoch membership map $e(t)$ defined by $e(t)=e$ iff $T_e\le t<T_{e+1}$. 
At all $t$ in epoch $e\ge 1$, the (clipped) factor $\psi_{e-1}$ computed from epoch $e-1$ is used; for $e=0$ we set a deterministic $\psi_{-1}\in[\psi_{\min},\psi_{\max}]$.
Let $(\mathcal{F}_t)_{t\ge 0}$ be the canonical filtration
\[
\mathcal{F}_t \coloneqq \sigma\!\big(\theta_0,\ \xi_0,\ldots,\xi_{t-1}\big),
\]
so that $g_t$ depends on $\xi_t$ and is \emph{not} $\mathcal{F}_t$-measurable, while any quantity decided before sampling $\xi_t$ must be $\mathcal{F}_t$-measurable (``predictable'').

\paragraph{Assumptions.}
\begin{enumerate}
\item[(A1)] (Lower boundedness) $f_\star\coloneqq \inf_{\theta}f(\theta)>-\infty$.
\item[(A2)] (Smoothness) $f$ is $L$-smooth: for all $x,y$, 
$\|\nabla f(x)-\nabla f(y)\|\le L\|x-y\|$.
\item[(A3)] (Unbiased stochastic gradient with bounded second moment) For some constants $A,B\ge 0$,
\[
\mathbb{E}[g_t\mid \mathcal{F}_t]=\nabla f(\theta_t),\qquad
\mathbb{E}\!\big[\|g_t\|^2\mid \mathcal{F}_t\big]\le A+B\,\|\nabla f(\theta_t)\|^2.
\]
\item[(A4)] (Bounded iterates) $(\theta_t)_{t\ge 0}$ is almost surely bounded.
\item[(A5)] (Predictable, bounded controller) For every epoch $e\ge 0$,
$\psi_e$ is $\mathcal{F}_{T_{e+1}}$-measurable and satisfies 
$0<\psi_{\min}\le \psi_e\le \psi_{\max}<\infty$ almost surely.
Equivalently, for any $t$ with $e(t)=e+1$, $\psi_e$ is $\mathcal{F}_t$-measurable.
\item[(A6)] (Robbins--Monro envelope) $\alpha\in(1/2,1]$ in \eqref{eq:stepsize-app}.
\end{enumerate}

\begin{remark}
The analysis uses only (A5), i.e., that the controller produces an \emph{a priori} bounded, predictable multiplicative factor per epoch. The specific statistic used to form $\psi_e$ (e.g., a 1D Wasserstein distance between empirical similarity distributions on $[0,1]$) is immaterial to the proofs.
\end{remark}

\paragraph{Technical lemmas.}
\begin{lemma}[Robbins--Monro conditions preserved]\label{lem:RM-app}
Under \textnormal{(A5)--(A6)}, the stepsizes are predictable and satisfy
\[
\sum_{t=0}^\infty \eta_t=\infty,\qquad 
\sum_{t=0}^\infty \eta_t^2<\infty,
\qquad \eta_t\ \text{is } \mathcal{F}_t\text{-measurable for all }t,\qquad \eta_t\to 0.
\]
\end{lemma}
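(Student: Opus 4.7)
The plan is to isolate the two independent ingredients, the deterministic Robbins--Monro envelope $\tilde\eta_t=\eta_0/(t+t_0)^\alpha$ and the epoch-level multiplicative factor $\psi_{e-1}^{-1}$, and then show that each of the four claimed properties follows from combining them through the sandwich
\[
\psi_{\max}^{-1}\,\tilde\eta_t \;\le\; \eta_t \;\le\; \psi_{\min}^{-1}\,\tilde\eta_t,
\]
which holds almost surely by (A5). This reduces the lemma to classical facts about $p$-series together with a careful measurability bookkeeping across epoch boundaries.

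The first step I would take is predictability. Fix $t$ and let $e=e(t)$, so $T_e\le t<T_{e+1}$. By (A5), $\psi_{e-1}$ is $\mathcal{F}_{T_e}$-measurable (for $e=0$ it is deterministic), and since $\mathcal{F}_{T_e}\subseteq\mathcal{F}_t$ because $T_e\le t$, the factor $\psi_{e-1}$ is $\mathcal{F}_t$-measurable. The envelope $\tilde\eta_t$ is deterministic, hence trivially $\mathcal{F}_t$-measurable, so the product $\eta_t=\psi_{e-1}^{-1}\tilde\eta_t$ is $\mathcal{F}_t$-measurable. The point to emphasize here is that the controller used inside epoch $e$ is the one formed from data collected \emph{through the end of} epoch $e-1$, never from the current epoch's $\xi_t$'s; this is what buys predictability.

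Next I would handle the two summability claims by the sandwich above. Since $\psi_{\max}^{-1}>0$ is a deterministic constant,
\[
\sum_{t=0}^\infty \eta_t \;\ge\; \psi_{\max}^{-1}\sum_{t=0}^\infty \frac{\eta_0}{(t+t_0)^\alpha} \;=\;\infty,
\]
because $\alpha\le 1$ by (A6). Similarly, squaring and using the upper bound,
\[
\sum_{t=0}^\infty \eta_t^2 \;\le\; \psi_{\min}^{-2}\sum_{t=0}^\infty \frac{\eta_0^2}{(t+t_0)^{2\alpha}} \;<\;\infty,
\]
because $2\alpha>1$ by (A6). Both facts follow from the standard integral-test comparison for $p$-series and require no stochastic argument beyond invoking (A5) to pull the (random) constants $\psi_{\max}^{-1}$ and $\psi_{\min}^{-2}$ outside the sums.

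Finally, $\eta_t\to 0$ is immediate from the same sandwich: $0\le \eta_t\le \psi_{\min}^{-1}\tilde\eta_t$ and $\tilde\eta_t\to 0$ since $\alpha>0$, so $\eta_t\to 0$ almost surely. There is no real obstacle here; the only thing that could plausibly go wrong is the timing of the controller. I would therefore flag explicitly that the argument would break if $\psi_e$ were allowed to depend on $\xi_t$ for some $t$ in epoch $e$, because then $\eta_t$ would not be $\mathcal{F}_t$-measurable and the subsequent martingale-type convergence arguments invoked in the main theorem would fail. Under (A5) this is ruled out by construction, so the lemma reduces to the elementary $p$-series bounds above.
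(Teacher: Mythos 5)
Your proposal is correct and follows essentially the same route as the paper's proof: establish predictability of $\eta_t$ from the $\mathcal{F}_t$-measurability of $\psi_{e(t)-1}$ guaranteed by (A5), then sandwich $\eta_t$ between constant multiples of the deterministic envelope $\tilde\eta_t$ and invoke the standard $p$-series facts for $\alpha\in(\tfrac12,1]$. The only cosmetic slip is calling $\psi_{\max}^{-1}$ and $\psi_{\min}^{-2}$ ``random'' constants near the end after correctly describing them as deterministic earlier; this does not affect the argument.
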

\begin{proof}
By (A5), $\psi_{e(t)-1}\in[\psi_{\min},\psi_{\max}]$ and is decided before sampling $\xi_t$, hence $\mathcal{F}_t$-measurable. Thus $\eta_t$ is predictable as the product of an $\mathcal{F}_t$-measurable random variable and the deterministic $\tilde\eta_t$. Comparison gives
\[
\frac{1}{\psi_{\max}}\sum_t \tilde\eta_t \ \le\ \sum_t \eta_t \ \le\ \frac{1}{\psi_{\min}}\sum_t \tilde\eta_t,
\qquad
\sum_t \eta_t^2 \ \le\ \frac{1}{\psi_{\min}^2}\sum_t \tilde\eta_t^2.
\]
Since $\alpha\in(1/2,1]$, $\sum_t \tilde\eta_t=\infty$ and $\sum_t \tilde\eta_t^2<\infty$; the same then holds for $(\eta_t)$, and moreover $\eta_t\to 0$.
\end{proof}

\begin{lemma}[One-step expected descent]\label{lem:descent-app}
Under \textnormal{(A2)--(A3)}, the update \eqref{eq:update-app} satisfies
\[
\mathbb{E}\!\left[f(\theta_{t+1})\mid \mathcal{F}_t\right]
\le f(\theta_t) 
- \left(\eta_t-\tfrac{LB}{2}\eta_t^2\right)\!\|\nabla f(\theta_t)\|^2
+ \tfrac{LA}{2}\eta_t^2.
\]
\end{lemma}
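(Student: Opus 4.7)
The plan is to deduce the inequality by a standard application of the $L$-smoothness descent inequality followed by taking a conditional expectation with respect to $\mathcal{F}_t$. First I would invoke (A2) to write
\[
f(\theta_{t+1}) \le f(\theta_t) + \langle \nabla f(\theta_t),\, \theta_{t+1}-\theta_t \rangle + \tfrac{L}{2}\|\theta_{t+1}-\theta_t\|^2,
\]
and then substitute the update rule \eqref{eq:update-app} to obtain
\[
f(\theta_{t+1}) \le f(\theta_t) - \eta_t \langle \nabla f(\theta_t), g_t\rangle + \tfrac{L\eta_t^2}{2}\|g_t\|^2.
\]

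Next I would apply $\mathbb{E}[\cdot \mid \mathcal{F}_t]$ to both sides. The key observation is that both $\theta_t$ and $\eta_t$ are $\mathcal{F}_t$-measurable: $\theta_t$ by the definition of the filtration, and $\eta_t$ by the predictability established in Lemma~\ref{lem:RM-app} (which required only (A5) and (A6)). Both therefore pass through the conditional expectation as constants. Applying the unbiasedness clause of (A3) reduces the inner-product term to $-\eta_t\|\nabla f(\theta_t)\|^2$, while the second-moment clause bounds $\mathbb{E}[\|g_t\|^2 \mid \mathcal{F}_t]$ by $A + B\|\nabla f(\theta_t)\|^2$. Collecting the two contributions proportional to $\|\nabla f(\theta_t)\|^2$ and isolating the noise term $\tfrac{LA}{2}\eta_t^2$ yields exactly the stated inequality.

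The derivation is essentially routine once the predictability of $\eta_t$ is in hand; the only subtle point --- and the one I would flag as the main potential pitfall --- is that $g_t$ is \emph{not} $\mathcal{F}_t$-measurable, since it depends on the fresh sample $\xi_t$. The conditional expectation therefore cannot simply pass through it, which is precisely what forces us to invoke both clauses of (A3) rather than naively replacing $g_t$ by $\nabla f(\theta_t)$. No additional property of the epoch-level controller $\psi_{e(t)-1}$ is needed beyond its boundedness and predictability, both of which are already absorbed into $\eta_t$ via \eqref{eq:stepsize-app}, so the lemma is insensitive to the particular topological statistic driving the controller.
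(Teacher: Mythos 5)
Your proposal is correct and follows exactly the paper's argument: apply the $L$-smoothness descent inequality, substitute the update rule, condition on $\mathcal{F}_t$, and invoke both clauses of (A3), using the $\mathcal{F}_t$-measurability of $\eta_t$ and $\theta_t$ to pull them through the conditional expectation. Your remark that $g_t$ is only $\mathcal{F}_{t+1}$-measurable matches the paper's own ``Notes on predictability,'' and correctly identifies why the predictability of $\eta_t$ (from (A5), via Lemma~\ref{lem:RM-app}) is tacitly needed even though the lemma is stated only under (A2)--(A3).
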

\begin{proof}
$L$-smoothness gives 
$f(\theta_{t+1})\le f(\theta_t)-\eta_t\langle \nabla f(\theta_t),g_t\rangle
+\tfrac{L}{2}\eta_t^2\|g_t\|^2$.
Condition on $\mathcal{F}_t$ and use (A3).
\end{proof}

\begin{lemma}[Robbins--Siegmund, almost supermartingale]\label{lem:RS-app}
Let $(X_t)$ be nonnegative and adapted. Suppose there exist nonnegative adapted $(Y_t)$ and $(Z_t)$ such that
\[
\mathbb{E}[X_{t+1}\mid\mathcal{F}_t]\le X_t - Y_t + Z_t,
\qquad \sum_{t=0}^{\infty}\mathbb{E}[Z_t]<\infty.
\]
Then $(X_t)$ converges almost surely to a finite random variable and $\sum_{t=0}^{\infty}Y_t<\infty$ almost surely.
\end{lemma}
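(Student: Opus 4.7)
The plan is to construct a single auxiliary supermartingale that encodes both conclusions, invoke a standard a.s.\ convergence theorem, and then decouple the two limits.

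First I would promote the hypothesis $\sum_{t\ge 0}\mathbb{E}[Z_t]<\infty$ to the pathwise statement $S_\infty\coloneqq\sum_{t=0}^{\infty}Z_t<\infty$ a.s.\ via Tonelli (this also gives $S_\infty\in L^1$). Then I introduce the process
\[
M_t\ \coloneqq\ X_t+\sum_{s=0}^{t-1}Y_s-\sum_{s=0}^{t-1}Z_s,
\]
which uses only \emph{past} values of $Y$ and $Z$, so $M_t$ is $\mathcal{F}_t$-measurable. A short telescoping computation---substituting $\mathbb{E}[X_{t+1}\mid\mathcal{F}_t]\le X_t-Y_t+Z_t$ and canceling the $Y_t$ and $Z_t$ terms---yields $\mathbb{E}[M_{t+1}\mid\mathcal{F}_t]\le M_t$, so $(M_t)$ is a supermartingale.

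The process $M_t$ is not nonnegative, but since $X_t,Y_s\ge 0$,
\[
M_t\ \ge\ -\sum_{s=0}^{t-1}Z_s\ \ge\ -S_\infty,
\]
so $M_t^-\le S_\infty$ pathwise and $\sup_t\mathbb{E}[M_t^-]\le\mathbb{E}[S_\infty]<\infty$. Doob's a.s.\ convergence theorem for supermartingales with bounded negative part then produces a finite limit $M_\infty$ with $M_t\to M_\infty$ a.s. Since $\sum_{s=0}^{t-1}Z_s\to S_\infty$ a.s., the rearrangement
\[
X_t+\sum_{s=0}^{t-1}Y_s\ =\ M_t+\sum_{s=0}^{t-1}Z_s
\]
converges a.s.\ to a finite limit. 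Because $X_t\ge 0$ while the partial sums $\sum_{s<t}Y_s$ are nondecreasing in $t$, the bounded-monotone principle forces them to converge individually: this simultaneously delivers $X_t\to X_\infty$ a.s.\ (first conclusion) and $\sum_{t=0}^{\infty}Y_t<\infty$ a.s.\ (second conclusion).

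The main technical obstacle I anticipate is invoking the correct version of supermartingale convergence: Doob's textbook statement for nonnegative supermartingales does not directly apply to $M_t$, and I need the sharper result that $(M_t)$ converges a.s.\ whenever $\sup_t\mathbb{E}[M_t^-]<\infty$. The domination $M_t\ge -S_\infty$ by the $L^1$ variable $S_\infty$ is precisely what supplies this condition. A minor bookkeeping point is integrability of each $X_t$ so that the supermartingale property is well-posed; this follows inductively from the unconditional consequence $\mathbb{E}[X_{t+1}]\le\mathbb{E}[X_t]+\mathbb{E}[Z_t]$ of the hypothesis, assuming $X_0\in L^1$ (implicit in the stated conditional inequality).
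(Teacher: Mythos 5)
Your proof is correct, but note that the paper itself gives no proof of this lemma at all: it is stated as the classical Robbins--Siegmund theorem and invoked as a known result (it is the one ingredient of Theorem 2's proof that the appendix does not derive). What you have written is essentially the standard textbook derivation: pass from $\sum_t\mathbb{E}[Z_t]<\infty$ to $S_\infty=\sum_t Z_t<\infty$ a.s.\ with $S_\infty\in L^1$ by Tonelli, form the compensated process $M_t=X_t+\sum_{s<t}Y_s-\sum_{s<t}Z_s$, verify the supermartingale inequality by telescoping, bound the negative part by the integrable dominating variable $S_\infty$ so that Doob's convergence theorem (in the $\sup_t\mathbb{E}[M_t^-]<\infty$ form) applies, and then split the limit of $X_t+\sum_{s<t}Y_s$ into its nonnegative and monotone pieces. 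Each step checks out, including the measurability of $M_t$ (it uses $Y_s,Z_s$ only for $s\le t-1$, and adaptedness suffices for the conditional-expectation computation) and the final decoupling: the nondecreasing partial sums $\sum_{s<t}Y_s$ are eventually bounded by the finite limit, hence converge, and $X_t$ then converges as the difference of two convergent sequences. Your integrability caveat is the right one to flag: you need $X_0\in L^1$ (and inductively $\mathbb{E}[X_{t+1}]\le\mathbb{E}[X_t]+\mathbb{E}[Z_t]$ propagates integrability of $X_t$, $Y_t$, and hence $M_t$) for the supermartingale property to be well-posed; the paper's statement omits this hypothesis, though it is harmless in the application since $X_t=f(\theta_t)-f_\star$ is bounded under (A1) and (A4). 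The fully general Robbins--Siegmund lemma dispenses with $X_0\in L^1$ via a localization argument, but that generality is not needed here. In short: no gap, and your write-up supplies a proof the paper merely cites.
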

\vspace{-0.5\baselineskip}

\paragraph{Main results.}
\begin{theorem}[Convergence to stationary points]\label{thm:stationary-app}
Under \textnormal{(A1)--(A6)}, the following hold:
\begin{enumerate}
\item $f(\theta_t)$ converges almost surely to a finite random variable;
\item $\displaystyle \sum_{t=0}^\infty \eta_t\|\nabla f(\theta_t)\|^2<\infty$ almost surely, hence $\liminf_{t\to\infty}\|\nabla f(\theta_t)\|=0$ almost surely;
\item Every almost-sure limit point of $(\theta_t)$ is a stationary point of $f$.
\end{enumerate}
\end{theorem}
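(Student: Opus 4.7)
The plan is to carry out the standard Robbins--Monro/Robbins--Siegmund reduction for nonconvex SGD on a smooth objective, using the three preparatory lemmas. The crucial observation is that (A5) makes $\psi_e$ a bounded, predictable multiplicative factor, so Lemma~\ref{lem:RM-app} guarantees that the effective stepsizes $(\eta_t)$ inherit predictability, $\sum \eta_t=\infty$, $\sum \eta_t^2<\infty$, and $\eta_t\to 0$. From this point on nothing in the analysis depends on the topological origin of the controller; it is completely opaque, and the proof is structurally identical to that for envelope-based SGD on a smooth $f$.

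For claims (1) and (2), I would combine Lemma~\ref{lem:descent-app} with Lemma~\ref{lem:RS-app} applied to $X_t = f(\theta_t)-f_\star\ge 0$. Using $\eta_t\le \tilde\eta_t/\psi_{\min}\to 0$ deterministically, there is a deterministic index $t^\star$ past which $\eta_t-\tfrac{LB}{2}\eta_t^2 \ge \tfrac{\eta_t}{2}$, so the descent inequality takes the supermartingale form
\[
\mathbb{E}[X_{t+1}\mid\mathcal{F}_t]\le X_t - \tfrac{\eta_t}{2}\|\nabla f(\theta_t)\|^2 + \tfrac{LA}{2}\eta_t^2,\qquad t\ge t^\star.
\]
The perturbation term is summable in expectation by Lemma~\ref{lem:RM-app}, so Lemma~\ref{lem:RS-app} yields almost surely that $X_t$ converges to a finite limit, proving (1), and that $\sum_t \eta_t\|\nabla f(\theta_t)\|^2<\infty$; combined with $\sum \eta_t=\infty$ this rules out a strictly positive liminf, giving (2).

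Claim (3) is the main obstacle, since $\liminf\|\nabla f(\theta_t)\|=0$ does not by itself preclude a limit point at which $\nabla f\ne 0$. I would use a Bertsekas--Tsitsiklis-style excursion argument in two stages. First, establish $\|\theta_{t+1}-\theta_t\|\to 0$ almost surely on the probability-one event from (A4) that the trajectory stays in a compact set $K$: on $K$ the gradient is bounded by (A2), so (A3) gives $\mathbb{E}[\|g_t\|^2\mid\mathcal{F}_t]\le C$ for a constant $C$; then $\sum_t \mathbb{E}[\|\theta_{t+1}-\theta_t\|^2] \le (C/\psi_{\min}^2)\sum_t \tilde\eta_t^2<\infty$, and Borel--Cantelli delivers $\|\theta_{t+1}-\theta_t\|\to 0$ a.s. Second, argue by contradiction: if a limit point $\theta^\ast$ satisfied $\|\nabla f(\theta^\ast)\|\ge 2\varepsilon>0$, continuity would supply nested neighborhoods $U\Subset V$ of $\theta^\ast$ on which $\|\nabla f\|\ge\varepsilon$; the trajectory visits $U$ infinitely often, and because displacements vanish each excursion from $U$ to $V^c$ must traverse $V\setminus U$ in small steps and hence accumulate a uniformly positive $\eta_t$-mass bounded below by a constant depending only on $\mathrm{dist}(\partial U,\partial V)$. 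Summing over the infinitely many excursions, on each of which $\|\nabla f(\theta_t)\|^2\ge\varepsilon^2$, contradicts the a.s.\ summability from (2).

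The delicate step is the per-excursion accounting in (3): proving a uniform lower bound on $\sum \eta_t$ during an excursion while simultaneously handling the gradient noise $g_t$ that may carry iterates temporarily outside the deterministic neighborhoods. This is where I expect to concentrate the most care; everything else reduces to mechanical bookkeeping once Lemmas~\ref{lem:RM-app}--\ref{lem:RS-app} are in place, and the controller $\psi_e$ enters only through the sandwich bounds $\psi_{\min}\le\psi_e\le\psi_{\max}$ used in Lemma~\ref{lem:RM-app}.
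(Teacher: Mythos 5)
Your treatment of claims (1) and (2) is essentially identical to the paper's: the same triple $X_t=f(\theta_t)-f_\star$, $Y_t=(\eta_t-\tfrac{LB}{2}\eta_t^2)\|\nabla f(\theta_t)\|^2$, $Z_t=\tfrac{LA}{2}\eta_t^2$, the same use of Lemma~\ref{lem:RM-app} to make $\eta_t$ predictable with $\sum\eta_t=\infty$, $\sum\eta_t^2<\infty$, the same tail index past which $\eta_t-\tfrac{LB}{2}\eta_t^2\ge\tfrac12\eta_t$, and the same invocation of Robbins--Siegmund. Where you genuinely diverge is claim (3). The paper disposes of it in one sentence --- boundedness gives accumulation points, and ``by continuity of $\nabla f$ and the previous display, every limit point satisfies $\nabla f(\bar\theta)=0$'' --- but that inference is not actually licensed by $\liminf_t\|\nabla f(\theta_t)\|=0$ alone: the liminf only certifies stationarity of limit points attained along the minimizing subsequence, not of \emph{every} limit point. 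You correctly identify this as the real obstacle and supply the standard Bertsekas--Tsitsiklis repair: first $\|\theta_{t+1}-\theta_t\|\to0$ a.s.\ (via $\sum_t\eta_t^2\,\mathbb{E}[\|g_t\|^2\mid\mathcal{F}_t]<\infty$ and a conditional Borel--Cantelli or localization on the event $\{\sup_t\|\theta_t\|\le n\}$, since the compact set in (A4) may be random), then the excursion accounting that forces a divergent contribution to $\sum_t\eta_t\|\nabla f(\theta_t)\|^2$ if a non-stationary limit point existed. So your route is strictly more complete than the paper's on point (3); the paper's version is shorter but leaves exactly the gap you flag as the delicate step. The only caution is that you present the excursion argument as a plan rather than a finished proof --- the per-excursion lower bound on $\sum\eta_t$ does need the stochastic-displacement control you mention --- but the architecture is sound and is the standard way to upgrade the liminf statement to stationarity of all limit points.
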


\begin{proof}
Define
\[
X_t\coloneqq f(\theta_t)-f_\star\ge 0,\quad
Y_t\coloneqq\Big(\eta_t-\tfrac{LB}{2}\eta_t^2\Big)\,\|\nabla f(\theta_t)\|^2,\quad
Z_t\coloneqq \tfrac{LA}{2}\eta_t^2.
\]
By Lemmas~\ref{lem:descent-app} and~\ref{lem:RM-app}, 
$\mathbb{E}[X_{t+1}\mid\mathcal{F}_t]\le X_t-Y_t+Z_t$ and $\sum_t \mathbb{E}[Z_t]\le\frac{LA}{2}\sum_t \eta_t^2<\infty$. Since $\eta_t\to 0$, there exists an almost surely finite time $T$ such that for all $t\ge T$, $\eta_t-\tfrac{LB}{2}\eta_t^2\ge \tfrac{1}{2}\eta_t$. Applying Lemma~\ref{lem:RS-app} from $T$ onward (the finite prefix being harmless) yields the convergence of $(f(\theta_t))_t$ and $\sum_t Y_t<\infty$ almost surely. In particular, $\sum_t \eta_t\|\nabla f(\theta_t)\|^2<\infty$ almost surely.

By Lemma~\ref{lem:RM-app}, $\sum_t \eta_t=\infty$; thus the only way for $\sum_t \eta_t\|\nabla f(\theta_t)\|^2$ to be finite is that $\liminf_t \|\nabla f(\theta_t)\|=0$ almost surely. Boundedness of $(\theta_t)$ (A4) implies the existence of almost-sure accumulation points; by continuity of $\nabla f$ and the previous display, every limit point $\bar\theta$ satisfies $\nabla f(\bar\theta)=0$.
\end{proof}

\begin{theorem}[Convergence under PL]\label{thm:PL-app}
Suppose, in addition, that $f$ satisfies the Polyak--\L{}ojasiewicz (PL) inequality
\[
\frac{1}{2}\|\nabla f(\theta)\|^2\ \ge\ \mu\big(f(\theta)-f_\star\big)\qquad\text{for some }\mu>0\text{ and all }\theta.
\]
Then $f(\theta_t)\to f_\star$ almost surely and 
$\mathrm{dist}(\theta_t,\mathcal{X}_\star)\to 0$ almost surely,
where $\mathcal{X}_\star\coloneqq \arg\min f$. Consequently, every limit point of $(\theta_t)$ is a (global) minimizer. If $\mathcal{X}_\star$ is a singleton, then $\theta_t\to \theta_\star$ almost surely.
\end{theorem}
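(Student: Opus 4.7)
The plan is to split the theorem into two logically separate claims. First, I would show that the controller $\psi_t$ acts merely as a bounded, predictable multiplicative perturbation of the deterministic Robbins--Monro (RM) envelope $\bar\eta(s)=\eta_0/(s+t_0)^\alpha$, and hence preserves the RM conditions $\sum_t \eta_t = \infty$ and $\sum_t \eta_t^2 < \infty$. Second, since the standard SGD convergence arguments (for smooth nonconvex objectives, and for PL/KL objectives) depend on the controller \emph{only} through those two series, they should apply essentially verbatim. This strategy isolates the novel topology-driven aspect into a single measurability-plus-boundedness check and reuses classical machinery for everything else.

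For the first part, I would set up the canonical filtration $(\mathcal{F}_t)$ carefully, observing that $\psi_{e-1}$ is assembled entirely from activations observed during epoch $e-1$ and hence is $\mathcal{F}_{T_e}$-measurable; thus at every step $t$ within epoch $e$, the factor $\psi_{e(t)-1}$ is decided \emph{before} the stochastic gradient $g_t$ is drawn. The clip operation enforces $\psi_{\min}\le\psi_{e-1}\le\psi_{\max}$ almost surely. A two-sided sandwich
\[
\tfrac{1}{\psi_{\max}}\sum_t \bar\eta(s) \ \le\ \sum_t \eta_t\ \le\ \tfrac{1}{\psi_{\min}}\sum_t \bar\eta(s),
\]
together with the analogous bound for $\sum_t\eta_t^2$, transfers the $\alpha\in(\tfrac12,1]$ summability of the bare envelope to the controlled stepsizes and also gives $\eta_t\to 0$.

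For the second part, I would apply the standard $L$-smoothness descent lemma together with the bounded-second-moment noise assumption $\mathbb{E}[\|g_t\|^2\mid\mathcal{F}_t]\le A+B\|\nabla f(\theta_t)\|^2$ to obtain
\[
\mathbb{E}[f(\theta_{t+1})\mid\mathcal{F}_t] \le f(\theta_t) - (\eta_t-\tfrac{LB}{2}\eta_t^2)\|\nabla f(\theta_t)\|^2 + \tfrac{LA}{2}\eta_t^2.
\]
Because $\eta_t\to 0$, the descent coefficient is eventually $\ge\tfrac12\eta_t$, and the additive term $\sum_t\eta_t^2$ is finite almost surely. The Robbins--Siegmund almost supermartingale theorem then delivers almost sure convergence of $f(\theta_t)$ and $\sum_t\eta_t\|\nabla f(\theta_t)\|^2<\infty$ a.s.; combined with $\sum_t\eta_t=\infty$ this forces $\liminf_t\|\nabla f(\theta_t)\|=0$ a.s., and the bounded-iterates hypothesis together with continuity of $\nabla f$ upgrades this to: every limit point is stationary. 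For the PL/KL refinement, I would substitute $\tfrac12\|\nabla f\|^2\ge \mu(f-f_\star)$ into the descent lemma to get a contraction of the form $\mathbb{E}[f(\theta_{t+1})-f_\star\mid\mathcal{F}_t]\le(1-\mu\eta_t)(f(\theta_t)-f_\star)+\tfrac{LA}{2}\eta_t^2$ (eventually), and a second application of Robbins--Siegmund gives $f(\theta_t)\to f_\star$ a.s., which PL then lifts to $\mathrm{dist}(\theta_t,\mathcal{X}_\star)\to 0$.

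The main obstacle will be the measurability bookkeeping: one must verify that $\psi_{e(t)-1}$ is genuinely $\mathcal{F}_t$-measurable at every step $t$ so that $\eta_t$ is predictable and does not ``peek'' at $g_t$, which is needed for the tower-property step in the descent lemma to go through. Once the epoch-level update rule for $\psi$ is pinned down as $\mathcal{F}_{T_e}$-measurable and reused throughout epoch $e$, everything else reduces to classical RM accounting and supermartingale convergence; in particular, nothing about the specific topological distance (TOP, WD, BD, HK, or SWK) enters the analysis beyond the fact that it is funneled through a clipped multiplicative update bounded in $[\psi_{\min},\psi_{\max}]$.
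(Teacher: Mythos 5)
Your proposal is correct and follows essentially the same route as the paper: a predictability-plus-sandwich lemma showing the clipped controller preserves the Robbins--Monro conditions, the $L$-smooth descent lemma with the $(A,B)$ noise bound, Robbins--Siegmund to get a.s.\ convergence of $f(\theta_t)$ and $\sum_t\eta_t\|\nabla f(\theta_t)\|^2<\infty$, and then the PL inequality combined with $\sum_t\eta_t=\infty$ to force $f(\theta_t)\to f_\star$ and, via quadratic growth, $\mathrm{dist}(\theta_t,\mathcal{X}_\star)\to 0$. The only cosmetic difference is that you re-derive a contraction recursion $\mathbb{E}[f(\theta_{t+1})-f_\star\mid\mathcal{F}_t]\le(1-\mu\eta_t)(f(\theta_t)-f_\star)+\tfrac{LA}{2}\eta_t^2$ and invoke Robbins--Siegmund a second time, whereas the paper simply substitutes PL into the already-established summability $\sum_t\eta_t\|\nabla f(\theta_t)\|^2<\infty$; both hinge on exactly the same two facts and are interchangeable.
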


\begin{proof}
Under PL,
$\|\nabla f(\theta_t)\|^2\ge 2\mu\,(f(\theta_t)-f_\star)$, hence
\[
\sum_t \eta_t\,(f(\theta_t)-f_\star)
\ \le\ \frac{1}{2\mu}\sum_t \eta_t\|\nabla f(\theta_t)\|^2
\ <\ \infty\quad\text{a.s.}
\]
By Theorem~\ref{thm:stationary-app}, $f(\theta_t)$ converges almost surely and $\sum_t \eta_t=\infty$ (Lemma~\ref{lem:RM-app}); therefore $f(\theta_t)\to f_\star$ almost surely (otherwise the above sum would diverge). PL then implies $\|\nabla f(\theta_t)\|\to 0$ almost surely and, by standard error-bound arguments under PL (e.g., quadratic growth), $\mathrm{dist}(\theta_t,\mathcal{X}_\star)\to 0$ almost surely. If $\mathcal{X}_\star$ is a singleton, the whole sequence converges.
\end{proof}

\paragraph{Why the controller cannot break convergence.}
The controller forms $\psi_e$ from epoch-$e$ data and \emph{clips} it into $[\psi_{\min},\psi_{\max}]$. Thus $\psi_e$ is $\mathcal{F}_{T_{e+1}}$-measurable and uniformly bounded (A5). Any additional heuristics (e.g., cooldowns, multiplicative up/down rules, late-phase boosts) that transform $\psi_e$ by $\mathcal{F}_{T_{e+1}}$-measurable operations and re-clip to $[\psi_{\min},\psi_{\max}]$ preserve (A5). Consequently, Lemma~\ref{lem:RM-app} holds, which is the only property of the controller used by the proofs of Theorems~\ref{thm:stationary-app} and~\ref{thm:PL-app}.

\paragraph{Notes on predictability.}
At each step $t$, the random variables $\psi_{e(t)-1}$ and $\eta_t$ are $\mathcal{F}_t$-measurable (decided before sampling $\xi_t$), while $g_t$ is only $\mathcal{F}_{t+1}$-measurable. This ensures the conditional expectations in Lemma~\ref{lem:descent-app} are well-defined and that the Robbins--Siegmund lemma applies with $Z_t=\tfrac{LA}{2}\eta_t^2$ and $\sum_t \mathbb{E}[Z_t]<\infty$ by Lemma~\ref{lem:RM-app}.

\section{EXPERIMENT DETAILS}
\paragraph{Additional Dataset Details.}
MUTAG consists of graphs representing nitroaromatic compounds, labeled according to whether they exhibit mutagenic effect on the \textit{Salmonella typhimurium} bacterium. PROTEINS contains protein graphs labeled as enzymes or non-enzymes. ENZYMES includes protein graphs categorized into 6 predefined classes according to enzyme commission numbers \citep{Morris2020TUDataset}.

\begin{table*}[t]
\centering
\small
\caption{Training configurations for CIFAR-10, CIFAR-100, and Mini-ImageNet experiments.}
\vspace{0.6\baselineskip}
\label{tab:image_params}
\begin{tabular}{lllll}
\toprule
\textbf{Parameter} & \textbf{Symbol / Key} & \textbf{CIFAR-10} & \textbf{CIFAR-100} & \textbf{Mini-ImageNet} \\
\midrule
Epochs & \texttt{epochs} & 50 & 90 & 90 \\
Optimizer & \texttt{optimizer} & SGD & SGD & SGD \\
Exponential decay $\gamma$ & \texttt{exp\_gamma} & 0.97 & 0.97 & 0.97 \\
Weight decay & \texttt{weight\_decay} & $5\times 10^{-4}$ & $7\times 10^{-4}$ & $5\times 10^{-4}$ \\
Momentum & \texttt{momentum} & 0.9 & 0.9 & 0.9 \\
\midrule
RM offset & $t_0$ (\texttt{t0}) & 1600 & 2000 & 1600 \\
RM exponent & $\alpha$ (\texttt{alpha}) & 0.52 & 0.50 & 0.60 \\
Upscale multiplier & $\gamma_{\uparrow}$ (\texttt{gamma\_up}) & 1.20 & 1.22 & 1.08 \\
Downscale multiplier & $\gamma_{\downarrow}$ (\texttt{gamma\_down}) & 0.85 & 0.88 & 0.82 \\
Late-phase multiplier & $\gamma_{\mathrm{late}}$ (\texttt{gamma\_late}) & 0.985 & 0.98 & 0.95 \\
Cooldown length & \texttt{cooldown} & 4 & 5 & 3 \\
Trigger count & \texttt{n\_trigger} & 6 & 7 & 3 \\
Late split ratio & $N_{\mathrm{ratio}}$ (\texttt{N\_ratio}) & 0.88 & 0.90 & 0.76 \\
Probe set size & $P_{\text{probe}}$ (\texttt{probe\_P}) & 1024 & 1024 & 10000 \\
$\psi_{\min}$ & \texttt{psi\_min} & 0.65 & 0.70 & 0.60 \\
$\psi_{\max}$ & \texttt{psi\_max} & 6.0 & 10.0 & 1.70 \\
Smoothing factor & $\beta$ (\texttt{beta}) & 0.96 & 0.97 & 0.94 \\
Threshold step size & $\tau$ (\texttt{tau}) & 0.002 & 0.003 & 0.002 \\
Robust window & $w$ (\texttt{robust\_w}) & 13 & 15 & 17 \\
MAD multiplier & $k$ (\texttt{mad\_k}) & 3.6 & 3.8 & 3.2 \\
Warm-up epochs & $K_{\mathrm{warm}}$ (\texttt{K\_warm}) & 4 & 6 & 12 \\

\bottomrule
\end{tabular}
\end{table*}

\begin{table*}[t]
\centering
\small
\caption{Training configurations for MUTAG, PROTEINS, and ENZYMES experiments.}
\vspace{0.6\baselineskip}
\label{tab:graph_params}
\begin{tabular}{lllll}
\toprule
\textbf{Parameter} & \textbf{Symbol / Key} & \textbf{MUTAG} & \textbf{PROTEINS} & \textbf{ENZYMES} \\
\midrule
Epochs & \texttt{epochs} & 300 & 500 & 500 \\
Optimizer & \texttt{optimizer} & SGD & SGD & SGD \\
Exponential decay $\gamma$ & \texttt{exp\_gamma} & 0.97 & 0.97 & 0.97 \\
Weight decay & \texttt{weight\_decay} & $5\times 10^{-4}$ & $5\times 10^{-4}$ & $5\times 10^{-4}$ \\
Momentum & \texttt{momentum} & 0.9 & 0.9 & 0.9 \\
\midrule
RM offset & $t_0$ (\texttt{t0}) & 800 & 1300 & 1200 \\
RM exponent & $\alpha$ (\texttt{alpha}) & 0.56 & 0.56 & 0.57 \\
Upscale multiplier & $\gamma_{\uparrow}$ (\texttt{gamma\_up}) & 1.10 & 1.08 & 1.08 \\
Downscale multiplier & $\gamma_{\downarrow}$ (\texttt{gamma\_down}) & 0.80 & 0.84 & 0.84 \\
Late-phase multiplier & $\gamma_{\mathrm{late}}$ (\texttt{gamma\_late}) & 0.95 & 0.96 & 0.96 \\
Cooldown length & \texttt{cooldown} & 3 & 4 & 4 \\
Trigger count & \texttt{n\_trigger} & 3 & 4 & 4 \\
Late split ratio & $N_{\mathrm{ratio}}$ (\texttt{N\_ratio}) & 0.70 & 0.80 & 0.78 \\
$\psi_{\min}$ & \texttt{psi\_min} & 0.62 & 0.60 & 0.60 \\
$\psi_{\max}$ & \texttt{psi\_max} & 1.8 & 1.8 & 2.0 \\
Smoothing factor & $\beta$ (\texttt{beta}) & 0.94 & 0.95 & 0.95 \\
Threshold step size & $\tau$ (\texttt{tau}) & 0.001 & 0.002 & 0.002 \\
Robust window & $w$ (\texttt{robust\_w}) & 12 & 16 & 15 \\
MAD multiplier & $k$ (\texttt{mad\_k}) & 3.3 & 3.2 & 3.2 \\
Warm-up epochs & $K_{\mathrm{warm}}$ (\texttt{K\_warm}) & 12 & 16 & 16 \\
\bottomrule
\end{tabular}
\end{table*}

\begin{figure*}[t]
  \centering
  \includegraphics[width=\linewidth]{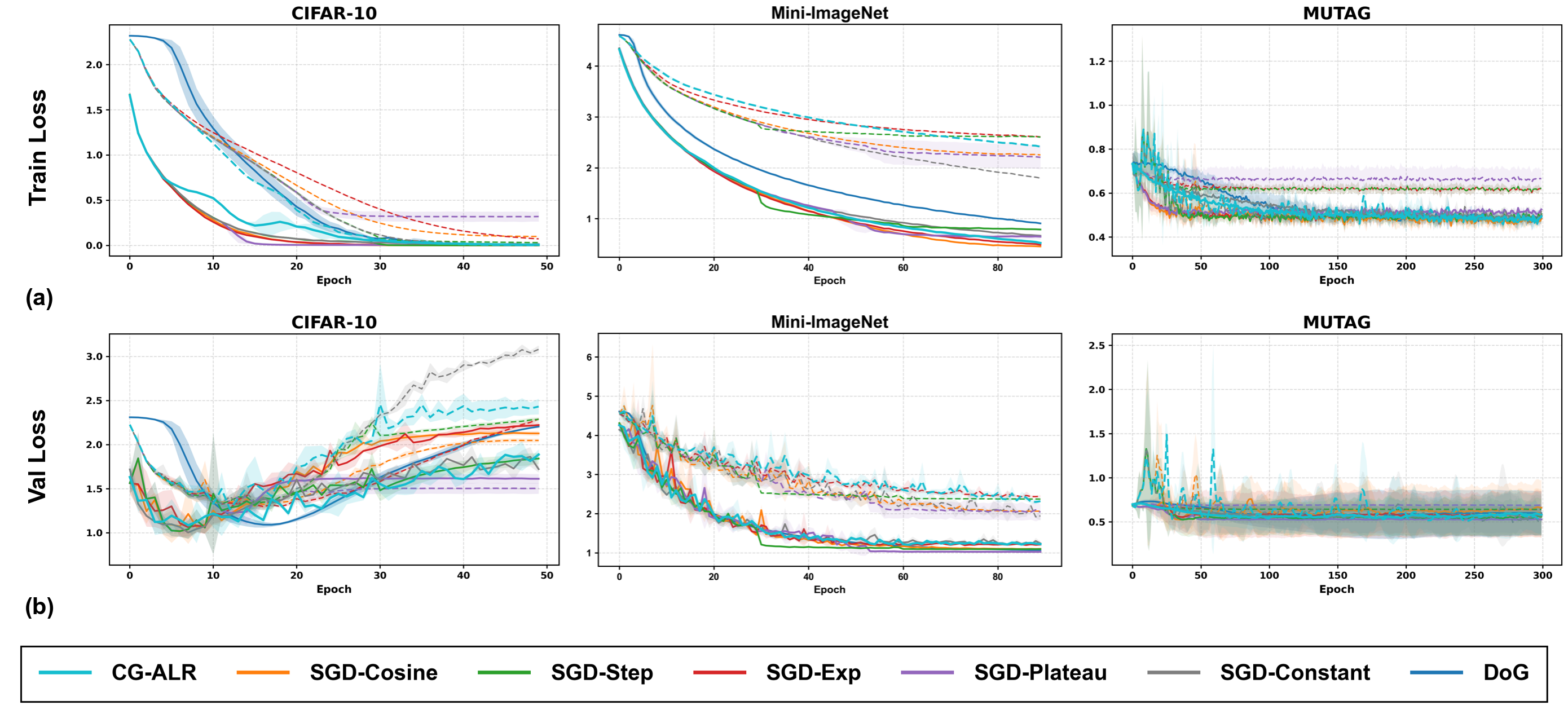}
  \vspace{0.6ex}
  \caption{
  (a) \textbf{Training Loss} on three datasets (\textbf{CIFAR-10}, \textbf{Mini-ImageNet}, and \textbf{MUTAG}). 
  (b) \textbf{Validation Loss} on the same datasets. 
  Solid lines indicate the averages across three seeds using the \emph{best} learning-rate configuration from the predefined grid 
  ($\eta^\star\!\in\!\{0.1,0.01,0.001\}$ for CG-ALR and SGD-based schedules; 
  $\eta\!\in\!\{0.1,0.01,0.001\}$ for SGD-Constant; 
  and default hyperparameters for DoG). 
  Dashed lines denote the corresponding \emph{worst} configurations.
  Shaded regions represent the min–max range across seeds.
  All methods share the same model backbone and training protocol.
  }
  \label{fig:appendix_train_val_loss}
\end{figure*}

\begin{table*}[t]
\centering
\caption{Median RED with 95\% Percentile-Bootstrap CIs Across Seeds for Mini-ImageNet (ResNet-18). 
\textbf{Bold} entries indicate cases where CG-ALR achieves better generalization performance ($\mathrm{RED}<0$), 
or statistically significant improvements when the 95\% CI lies entirely below 0.}
\label{tab:study2_miniimagenet}
\vspace{1.5em}
\normalsize
\setlength{\tabcolsep}{4pt}
\begin{tabular}{@{}llr@{}}
\toprule
\multicolumn{1}{c}{\textbf{CG-ALR}} &
\multicolumn{1}{c}{\textbf{Baseline}} &
\multicolumn{1}{c}{\textbf{RED (95\% CI)}} \\
\midrule
SWK & DoG          & 0.074~(\,-0.129,\;0.487\,) \\
SWK & SGD-Cosine   & 0.085~(\,0.026,\;0.107\,) \\
SWK & SGD-Exp      & \textbf{-0.018}~(\,\textbf{-0.041},\;\textbf{-0.008}\,) \\
SWK & SGD-Plateau  & 0.052~(\,-0.004,\;0.157\,) \\
SWK & SGD-Step     & \textbf{-0.027}~(\,\textbf{-0.048},\;\textbf{-0.001}\,) \\
SWK & SGD-Constant & 0.033~(\,-0.046,\;0.184\,) \\
\midrule
TOP & DoG          & 0.074~(\,-0.129,\;0.487\,) \\
TOP & SGD-Cosine   & 0.084~(\,0.026,\;0.107\,) \\
TOP & SGD-Exp      & \textbf{-0.021}~(\,\textbf{-0.041},\;\textbf{-0.018}\,) \\
TOP & SGD-Plateau  & 0.052~(\,-0.004,\;0.156\,) \\
TOP & SGD-Step     & \textbf{-0.028}~(\,\textbf{-0.048},\;\textbf{-0.008}\,) \\
TOP & SGD-Constant & 0.033~(\,-0.043,\;0.176\,) \\
\midrule
WD  & DoG          & 0.074~(\,-0.129,\;0.487\,) \\
WD  & SGD-Cosine   & 0.076~(\,0.026,\;0.107\,) \\
WD  & SGD-Exp      & \textbf{-0.021}~(\,\textbf{-0.046},\;\textbf{-0.008}\,) \\
WD  & SGD-Plateau  & 0.052~(\,-0.004,\;0.144\,) \\
WD  & SGD-Step     & \textbf{-0.028}~(\,\textbf{-0.048},\;\textbf{-0.022}\,) \\
WD  & SGD-Constant & 0.033~(\,-0.046,\;0.183\,) \\
\bottomrule
\end{tabular}
\end{table*}

\paragraph{System Configuration.}
All experiments were performed on the system with the following specifications:
\begin{itemize}
  \item \textbf{CPU:} Intel Xeon Gold 6226, 2.70\,GHz
  \item \textbf{GPU:} NVIDIA RTX 5000 Ada Generation, 32\,GB VRAM
  \item \textbf{DRAM:} 755\,GB
  \item \textbf{Architecture:} x86\_64
  \item \textbf{OS:} Ubuntu 22.04.5 LTS
 \item \textbf{Kernel:} Linux 5.15
  \item \textbf{CUDA Toolkit:} 12.8
  \item \textbf{NVIDIA Driver:} 570.148.08
\end{itemize}

\paragraph{Hyperparameters in Algorithm 1.}
The hyperparameters used in all experiments are listed in Table~\ref{tab:image_params}.

\section{ADDITIONAL EXPERIMENT RESULTS}

\begin{table*}[t]
\centering
\caption{Median RED with 95\% Percentile-Bootstrap CIs Across Seeds. 
\textbf{Bold} entries indicate cases where CG-ALR achieves better generalization performance ($\mathrm{RED}<0$), 
or statistically significant improvements when the 95\% CI lies entirely below 0. 
Results are shown for \textbf{MUTAG} with GCN and GAT.}
\label{tab:study2_mutag_gcn_gat_merged}
\vspace{1.2em}
\small
\setlength{\tabcolsep}{3pt}
\begin{tabular}{@{}llrr@{}}
\toprule
\multicolumn{2}{c}{} & \multicolumn{1}{c}{\textbf{GCN}} & \multicolumn{1}{c}{\textbf{GAT}} \\
\cmidrule(lr){3-4}
\multicolumn{1}{c}{\textbf{CG-ALR}} &
\multicolumn{1}{c}{\textbf{Baseline}} &
\multicolumn{1}{c}{RED (95\% CI)} &
\multicolumn{1}{c}{RED (95\% CI)} \\
\midrule
BD  & DoG          & \textbf{-0.111}~(\,\textbf{-0.500},\;\textbf{-0.091}\,) &  0.000~(\,-0.625,\; 0.364\,) \\
BD  & SGD-Cosine   &  0.111~(\,-0.500,\; 0.182\,) & \textbf{-0.182}~(\,-0.429,\; 0.000\,) \\
BD  & SGD-Exp      &  0.111~(\,-0.500,\; 0.182\,) & \textbf{-0.182}~(\,-0.300,\; 0.077\,) \\
BD  & SGD-Plateau  & \textbf{-0.333}~(\,-0.500,\; 0.182\,) & \textbf{-0.182}~(\,-0.857,\; 0.000\,) \\
BD  & SGD-Step     &  0.111~(\,-0.500,\; 0.182\,) & \textbf{-0.182}~(\,-0.857,\; 0.000\,) \\
BD  & SGD-Constant & \textbf{-0.333}~(\,-0.500,\; 0.182\,) &  0.000~(\,-0.004,\; 0.091\,) \\
\midrule
HK  & DoG          &  0.000~(\,-0.200,\; 0.077\,) &  0.000~(\,-0.625,\; 0.462\,) \\
HK  & SGD-Cosine   &  0.200~(\,-0.200,\; 0.308\,) & \textbf{-0.100}~(\,-0.250,\; 0.000\,) \\
HK  & SGD-Exp      &  0.200~(\,-0.200,\; 0.308\,) & \textbf{-0.182}~(\,-0.300,\; 0.077\,) \\
HK  & SGD-Plateau  & \textbf{-0.200}~(\,-0.200,\; 0.308\,) & \textbf{-0.182}~(\,-0.308,\; 0.000\,) \\
HK  & SGD-Step     &  0.200~(\,-0.200,\; 0.308\,) & \textbf{-0.182}~(\,-0.625,\; 0.000\,) \\
HK  & SGD-Constant & \textbf{-0.200}~(\,-0.200,\; 0.308\,) &  0.000~(\, 0.000,\; 0.182\,) \\
\midrule
SWK & DoG          &  0.000~(\,-0.333,\; 0.231\,) &  0.000~(\,-0.625,\; 0.000\,) \\
SWK & SGD-Cosine   &  0.000~(\, 0.000,\; 0.385\,) & \textbf{-0.182}~(\,-0.429,\; 0.000\,) \\
SWK & SGD-Exp      &  0.000~(\, 0.000,\; 0.385\,) & \textbf{-0.182}~(\,-0.300,\; 0.077\,) \\
SWK & SGD-Plateau  &  0.000~(\, 0.000,\; 0.077\,) & \textbf{-0.182}~(\,-0.857,\; 0.000\,) \\
SWK & SGD-Step     &  0.000~(\, 0.000,\; 0.385\,) & \textbf{-0.182}~(\,-0.857,\; 0.000\,) \\
SWK & SGD-Constant &  0.000~(\, 0.000,\; 0.077\,) &  0.000~(\, 0.000,\; 0.000\,) \\
\midrule
TOP & DoG          & \textbf{-0.250}~(\,-0.500,\; 0.000\,) &  0.000~(\,-0.625,\; 0.304\,) \\
TOP & SGD-Cosine   &  0.000~(\,-0.125,\; 0.000\,) & \textbf{-0.100}~(\,-0.375,\; 0.000\,) \\
TOP & SGD-Exp      &  0.000~(\,-0.125,\; 0.000\,) & \textbf{-0.300}~(\,-0.429,\; 0.077\,) \\
TOP & SGD-Plateau  & \textbf{-0.125}~(\,-0.500,\; 0.000\,) & \textbf{-0.300}~(\,-0.625,\; 0.000\,) \\
TOP & SGD-Step     &  0.000~(\,-0.125,\; 0.000\,) & \textbf{-0.300}~(\,-0.625,\; 0.000\,) \\
TOP & SGD-Constant & \textbf{-0.125}~(\,-0.500,\; 0.000\,) &  0.000~(\, 0.000,\; 0.100\,) \\
\midrule
WD  & DoG          & \textbf{-0.200}~(\,-0.200,\; 0.000\,) &  0.000~(\,-0.625,\; 0.300\,) \\
WD  & SGD-Cosine   &  0.100~(\,-0.200,\; 0.200\,) & \textbf{-0.250}~(\,-0.429,\; 0.000\,) \\
WD  & SGD-Exp      &  0.100~(\,-0.200,\; 0.200\,) & \textbf{-0.250}~(\,-0.300,\; 0.077\,) \\
WD  & SGD-Plateau  & \textbf{-0.200}~(\,-0.200,\; 0.100\,) & \textbf{-0.300}~(\,-0.857,\; 0.000\,) \\
WD  & SGD-Step     &  0.100~(\,-0.200,\; 0.200\,) & \textbf{-0.300}~(\,-0.857,\; 0.000\,) \\
WD  & SGD-Constant & \textbf{-0.200}~(\,-0.200,\; 0.100\,) &  0.000~(\,-0.004,\; 0.000\,) \\
\bottomrule
\end{tabular}
\end{table*}

Figure~\ref{fig:appendix_train_val_loss} reports training and validation loss for additional datasets (CIFAR-10, Mini-ImageNet, MUTAG).  For MUTAG, we employ a single fully connected (FC) layer and compute the connectomes from this first FC layer. For CIFAR-10 and Mini-ImageNet, the models include three FC layers, and the connectomes are computed from the second FC layer.

For Mini-ImageNet, as shown in Table~\ref{tab:study2_miniimagenet}, CG-ALR delivers statistically significant generalization gains in several comparisons. For MUTAG with GCN and GAT backbones, we employ a single FC layer and compute the connectomes from this first FC layer. This design choice follows from the small graph size and limited sample count of MUTAG (\(n = 188\)), where deeper architectures tend to overfit rapidly and provide limited representational diversity. As shown in Table~\ref{tab:study2_mutag_gcn_gat_merged}, our CG-ALR outperforms the baselines, though not always with statistical significance. This is largely due to the simplicity and small scale of MUTAG, where performance differences are easily saturated and variance across seeds can obscure consistent trends.

\end{document}